\newtheorem{lemma}{Lemma}
\newtheorem{definition}{Definition}
\newtheorem{thm}{Theorem}
\begin{document}

\title{Randomized Kernel Multi-view Discriminant Analysis}

\author{Xiaoyun Li, \  Jie Gui, \ and Ping Li\\
{\normalfont Department of Statistics, Rutgers University}\\
{\normalfont Cognitive Computing Lab, Baidu Research}\\
{\normalfont xiaoyun.li@rutgers.edu, \{guijiejie, pingli98\}gmail.com}
}

\maketitle
\bibliographystyle{ecai}

\begin{abstract}
	In many artificial intelligence and computer vision systems, the same object can be observed at distinct viewpoints or by diverse sensors, which raises the challenges for recognizing objects from different, even heterogeneous views. Multi-view discriminant analysis (MvDA) is an effective multi-view subspace learning method, which finds a discriminant common subspace by jointly learning multiple view-specific linear projections for object recognition from multiple views, in a non-pairwise way. In this paper, we propose the kernel version of multi-view discriminant analysis, called kernel multi-view discriminant analysis (KMvDA). To overcome the well-known computational bottleneck of kernel methods, we also study the performance of using random Fourier features (RFF) to approximate Gaussian kernels in KMvDA, for large scale learning. Theoretical analysis on stability of this approximation is developed. We also conduct experiments on several popular multi-view datasets to illustrate the effectiveness of our proposed strategy.
\end{abstract}

\section{Introduction}

Multi-view learning~\cite{Article:Kan_PAMI16,Proc:Trapp_UAI17,Proc:Xin_UAI17,Proc:Gui_ICDM18} or learning with multiple different feature sets is rapidly growing research area with practical success in  important applications. For example, a person can be described by visual light face image, sketch, near infrared face image, iris, fingerprint, palmprint or signature with information secured from many different sources (e.g., distinct angles). Our task is to classify an object from one view, given the information from other views. For instance, Figure~\ref{fig:CUFSF-example} shows some samples from two different views in the CUFSF multi-view dataset (more detailed dataset description is provided in the experiments section). Here, each person represents a object class, and we would like to classify a sketch given all the label information of the photos, or vise versa. In many cases, the views can be quite different as well. An example is the content-based web-image retrieval, where an object can be identified by the text depicting the image or visual features from the image itself. Here the text and image can be regarded as two distinct views, of the~same~object.

In this paper, we focus on multi-view subspace learning, which aims to learn a common subspace shared by all different views. The research on multi-view learning started with ``two-view'' learning. The canonical correlation analysis (CCA)~\cite{Proc:Hotelling_1936,Book:HTF09,Proc:Xu_NeurIPS19} is perhaps the most well-known two-view unsupervised algorithm. CCA finds the linear projections for two views respectively which have maximum correlation with each other. The discriminative variants of CCA were studied in~\cite{Proc:Ma_ICML07,Proc:Sun_ICDM08}. The paper~\cite{Proc:Lin_ECCV06} provided common discriminant feature extraction  to maximize the inter-class separability and meanwhile minimize the intra-class scatter. Multi-view CCA (MCCA)~\cite{Article:Nielsen_TIP02,Proc_Rupnik10} was proposed to secure one common subspace for all views, under unsupervised setting. Generalized multi-view analysis (GMA) framework~\cite{Proc:Sharma_CVPR12} took advantage of class information, resulting in a discriminant common space.  The authors of~\cite{Proc:Sim_ICCV09} presented the multi-model discriminant analysis (MMDA) to decompose variations in a dataset into independent modes (factors). The multi-view discriminant analysis (MvDA) was proposed by~\cite{Article:Kan_PAMI16}, which learned projections for different views jointly via Fisher discriminant analysis. In~\cite{Article:Cao_2017}, the authors proposed a  variant called MvMDA, which differs from standard MvDA in the definition of inter-class and intra-class covariance matrices.

\begin{figure}
	\begin{center}
		\includegraphics[width=3.47in]{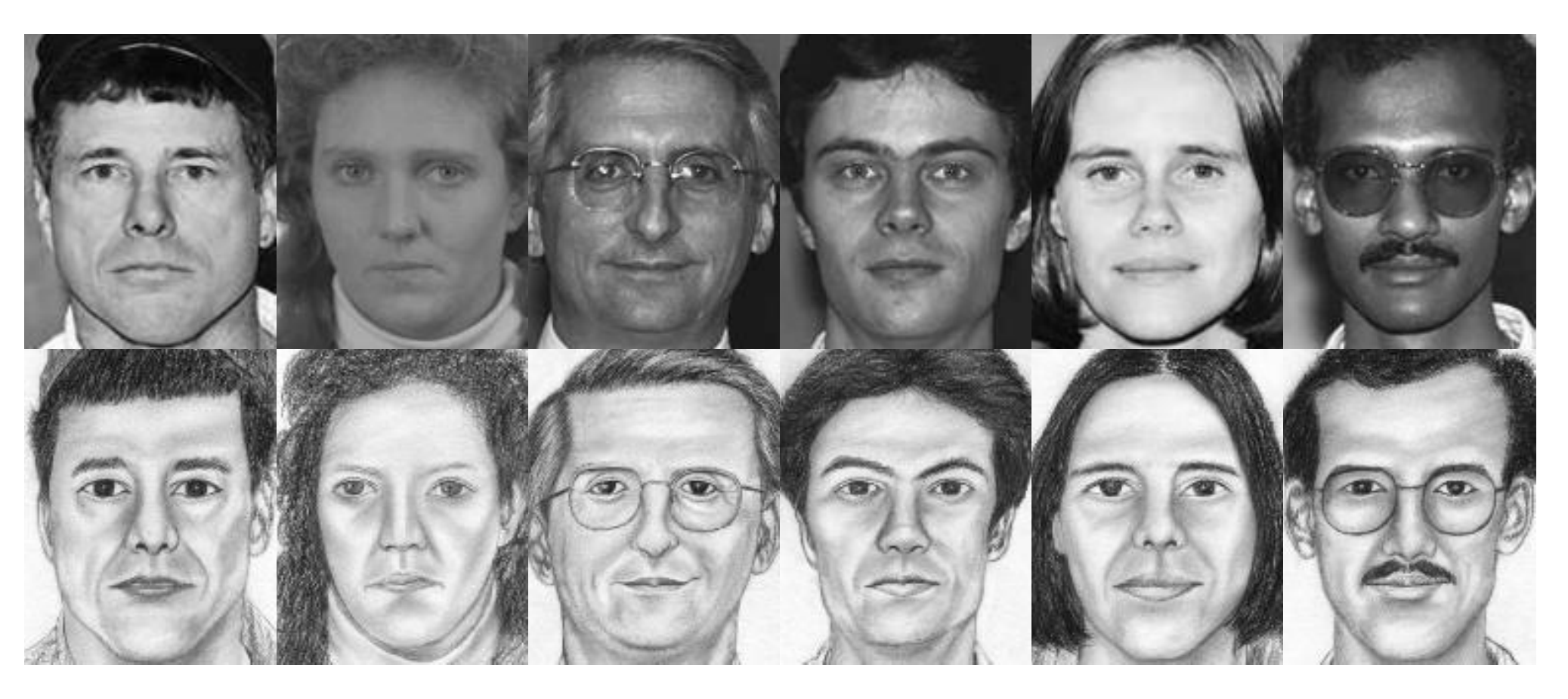}
	\end{center}
 	\vspace{-0.2in}
	\caption{Examples from CUFSF multi-view dataset. View 1 (first row): actual face photo. View 2 (second row): sketch drawn by artist.}
	\label{fig:CUFSF-example}
\end{figure}

In many cases, non-linear kernel has stronger learning capacity than linear kernel. Hence, to enhance performance of standard MvDA algorithm, in this paper we seek to kernelize multi-view discriminant analysis, and derive so-called kernel MvDA (KMvDA). However, it is known that a direct implementation of nonlinear kernels is difficult for large-scale datasets, since even for a medium-sized dataset with only 100,000 instances, the 100,000$\times$100,000 kernel matrix has $10^{10}$ entries, which is essentially not feasible for most machines that people use daily. Therefore, in practical applications, being capable of linearizing nonlinear kernels is highly welcome~\cite{Book:Bottou_07,Proc:HashLearning_NIPS11,Proc:Li_NIPS13}. Random Fourier features (RFF's)~\cite{Proc:Rahimi_NIPS07,Proc:Li_KDD17} is a celebrated algorithm to linearly approximate Gaussian (RBF) kernel, which has been widely used and studied in literature ~\cite{Proc:Lopez_ICML14,Proc:Sutherland_UAI15} on clustering, CCA, PCA, classification, etc. For two data points, the inner product of linearized Fourier features is unbiased estimator of the true RBF kernel. By using faster linearized algorithms, we are able to exploit the learning power of non-linear kernel (e.g., RBF kernel) in linear time when dealing with large-scale datasets.

\vspace{0.05in}

\noindent \textbf{Our contributions.}\ \  In this paper, we first derive a kernelized MvDA, and then apply random Fourier features to KMvDA and demonstrate its feasibility for large scale learning. To the best of our knowledge, this is the first attempt in literature to randomize multi-view discriminant learning. It is shown that by approximation, the change in eigenspace (and hence the projections) could be bounded and converges to zero as we increase the number of random features. Experimental results provide evidence on the advantage of KMvDA, as well as the effectiveness of the linearized approximation.

\vspace{0.05in}
\noindent \textbf{Roadmap.}\ \  In Section 2, we introduce some preliminaries on multi-view discriminant analysis (MvDA) and eigenspace comparison. In Section 3, we formulate the kernel MvDA (KMvDA). In Section 4, we introduce the kernel approximation scheme and provide theoretical analysis of the approximation error on the subspace learned by KMvDA. In Section 5, we conduct experiments to show the effectiveness of our method. In the last section, we discuss some relevant topics and conclude the paper.

\section{Preliminaries}

\subsection{Problem setting and notations}

In this paper, we denote a  multi-view dataset by $X = \left\{ {{x_{ijk}}\left| {i = 1, \cdots ,c; j = 1, \cdots ,v; k = 1, \cdots ,{n_{ij}}} \right.} \right\}$ with the instances where ${x_{ijk}} \in {R^{{d_j}}}$ is the $k$-th instance from the $i$-th class of the $j$-th view of $d_j$ dimension, $c$ denotes the number of classes, $v$ is the number of views. $X_j$ represents the instances from the $j$-th view. $n_{ij}$ denotes the number of instances from the $i$-th class of the $j$-th view, and $n_i$ is the number of observations from the $i$-th class of all views. Let $n$ denote the total number of examples from all views. Let $\mathcal C(x)$ denote the class label of $x$. Let $w_1,...,w_v$ denote the view-specific linear projections that we aim to learn. Throughout the paper, $\Vert\cdot\Vert_F$ denotes matrix Frobenius norm and $\Vert\cdot\Vert$ is the operator norm for matrix and Euclidean norm for vector.

\begin{figure}
	\begin{center}
		\includegraphics[width=3.3in]{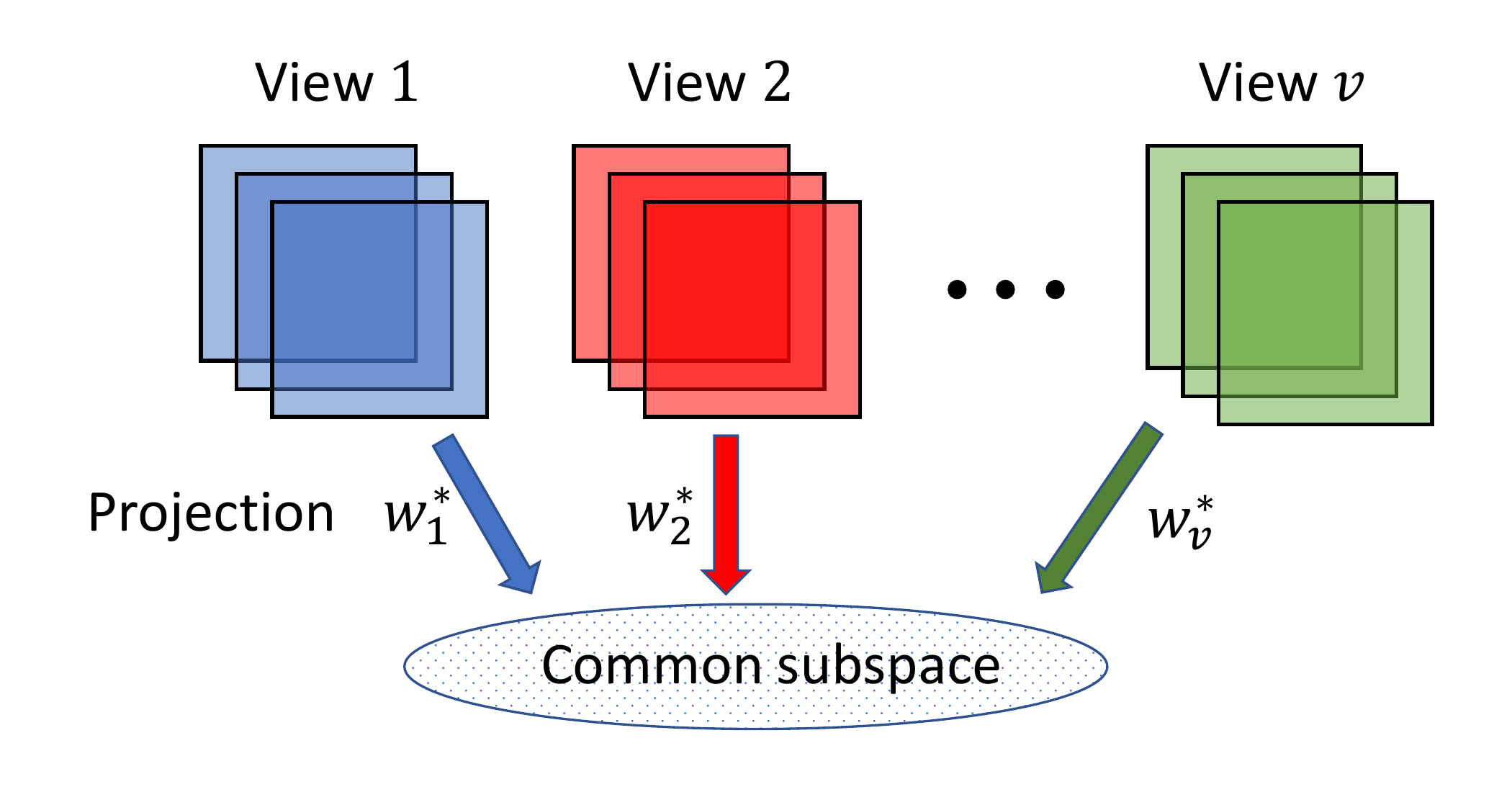}
	\end{center}
 	\vspace{-0.25in}
	\caption{
	An illustration of the MvDA framework, which jointly learns a projection for each view and conducts classification in the common subspace.
	}	\label{fig:MvDA-framework}
\end{figure}

\vspace{-0.2in}

\subsection{Multi-view discriminant analysis}

Multi-view discriminant analysis (MvDA)~\cite{Article:Kan_PAMI16} aims to find $v$ view-specific linear projections $w_1, w_2, \cdots , w_v$ which can respectively transform the instances from $v$ views to one discriminant common space, by minimizing the within-class variation and maximizing the between-class variation. Instances from $v$ views are then projected onto the same common space by $w_1, w_2, \cdots , w_v$. Figure~\ref{fig:MvDA-framework} depicts the idea and framework of MvDA. To achieve cross-view discrimination, the within-class variation from all views should be minimized while the between-class variation from all views should be maximized in the common space.

More specifically, MvDA is a generalization of linear discriminant analysis (LDA)~\cite{Book:HTF09} for multi-view learning. They share the same type of objective function (i.e., the Rayleigh quotient),
\begin{eqnarray}\label{eqn:1}
\left( {w_1^*,w_2^*, \cdots ,w_v^*} \right) = \arg \mathop {\max }\limits_{{w_1}, \cdots ,{w_v}} tr\left( {\frac{{{W^T}DW}}{{{W^T}SW}}} \right),
\end{eqnarray}
where $S=\left[ {\begin{array}{*{20}{c}}
	{{S_{11}}} &  \cdots  & {{S_{1v}}}  \\
	\vdots  &  \vdots  &  \vdots   \\
	{{S_{v1}}} &  \cdots  & {{S_{vv}}}  \\
	\end{array}} \right]$, $D=\left[ {\begin{array}{*{20}{c}}
	{{D_{11}}} &  \cdots  & {{D_{1v}}}  \\
	\vdots  &  \vdots  &  \vdots   \\
	{{D_{v1}}} &  \cdots  & {{D_{vv}}}  \\
	\end{array}} \right]$.
	
\noindent The terms $S$ and $D$ can be seen as the within-class scatter matrix and between-class scatter matrix for multi-view learning, respectively. The $r$-th column and the $j$-th row block matrix of $S$, which is denoted by $S_{jr}$, is defined as:
\begin{eqnarray}\label{eqn:3}
{S_{jr}} = \left\{ \begin{array}{l}
\sum\nolimits_{i = 1}^c {\left( {\sum\nolimits_{k = 1}^{{n_{ij}}} {{x_{ijk}}x_{ijk}^T - \frac{{{n_{ij}}{n_{ij}}}}{{{n_i}}}u_{ij}^{\left( x \right)}u_{ij}^{{{\left( x \right)}^T}}} } \right)} ,\ \ \ j = r, \\
- \sum\nolimits_{i = 1}^c {\frac{{{n_{ij}}{n_{ir}}}}{{{n_i}}}u_{ij}^{\left( x \right)}u_{ir}^{{{\left( x \right)}^T}},} \hspace{0.9in} otherwise. \\
\end{array} \right. \nonumber
\end{eqnarray}

\noindent The term $D_{jr}$ is the $r$-th column and the $j$-th row block matrix of $D$ and defined as

{\fontsize{8}{8}\selectfont
\begin{equation*}\label{eqn:2}
{D_{jr}} = \left( {\sum\limits_{i = 1}^c {\frac{{{n_{ij}}{n_{ir}}}}{{{n_i}}}u_{ij}^{\left( x \right)}u_{ir}^{{{\left( x \right)}^T}}} } \right)- \frac{1}{n}\left( {\sum\limits_{i = 1}^c {{n_{ij}}u_{ij}^{\left( x \right)}} } \right){\left( {\sum\limits_{i = 1}^c {{n_{ir}}u_{ir}^{\left( x \right)}} } \right)^T},
\end{equation*}}\\
\noindent with $u_{ij}^{\left( x \right)} = \frac{1}{{{n_{ij}}}}\sum\nolimits_{k = 1}^{{n_{ij}}} {{x_{ijk}}}$.

\vspace{0.1in}

Basically, MvDA extends LDA to multi-view setting with carefully designed block covariance matrices that aim to achieve accurate multi-view classification. The standard approach for solving the optimization problem (\ref{eqn:1}) is by transforming it into a generalized eigenvalue problem, which will be introduced in the next sub-section.

\subsection{Eigenspace Comparison}

A standard eigenvalue problem (SEP), given a square matrix A, is to solve $Aw=\lambda w$ for vector $w$ and scalar $\lambda$. Feasible $\lambda$'s are called the eigenvalues (or spectrums), and $w$'s are called eigenvectors. For SEP, there exist many well-known results on the eigenvalues (e.g., Weyl's theorem) when a small perturbation is added to $A$. The  Davis-Kahan theorem (e.g., the $\sin\Theta$ theorem) provides bounds on the change of angles between eigenvectors, which could be regarded as a measure of the change of eigenspace. These theorems cast additional restrictions on the eigenvalues by assuming the existence of eigengaps.

A generalized eigenvalue problem (GEP), given $A,B\in\mathbbm R^{n\times n}$, is to find the solution to the system
\begin{equation}
    \beta Ax=\alpha Bx,  \label{GEP}
\end{equation}
and each pair of $(\alpha,\beta)$ and $x$ that satisfies this equation is called a pair of generalized eigenvalue and generalized eigenvector. One natural idea to study the perturbation of generalized eigen system (\ref{GEP}) is to left-multiply the inverse of $B$ on both sides and yields an ordinary eigenvalue problem $B^{-1}Ax=\frac{\alpha}{\beta} x$, provided that $B$ is invertible. However, when $B$ is singular, this approach would fail but feasible solution to GEP may still exists~\cite{Article:Stewart_1979,Article:Sun_1983,Article:Crawford_1976}. More specifically, when matrices $A$ and $B$ have common null space, the set of eigenvalues may become the whole complex plane. In this case, the problem is said to be \textit{ill-disposed} since the spectrum is extremely unstable. The \textit{Crawford number}, defined as
\begin{equation}
    \mathcal C(A,B)=\min_{\Vert x\Vert=1}\{|x^H(A+iB)x|\}, \label{crawford}
\end{equation}
is very important in this context. Here $x^H$ means the conjugate transpose. For real matrices, we could also write it as $\mathcal C(A,B)=\displaystyle\min_{\Vert x\Vert=1}\{(x^TAx)^2+(x^TBx)^2\}^{1/2}$. Matrix pair $(A,B)$ is said to be definite if $\mathcal C(A,B)>0$ holds. In this case, the problem is called a definite problem. This technical condition ensures that $A$ and $B$ not having interlacing null space.

It is shown in~\cite{Article:Stewart_1979} that without special information, the eigenvectors may be very sensitive to small perturbations, but the subspace spanned by them may be stable. In the following, we summarize some related concepts and results on subspace perturbation. Notations with tildes denote the counterparts in perturbed problem.

\begin{definition}
	Suppose $(A,B)$ is a definite matrix pair. A subspace $\mathcal X$ is an eigenspace of $(A,B)$ if $\dim(A\mathcal X+B\mathcal X)\leq\dim\mathcal X.$
\end{definition}
For a definite pair, there always exists $Z=(Z_1,Z_2)$ with $Z_1\in C_{n\times l}$, $Z_2\in C_{n\times(n-l)}$, such that
\begin{equation}
Z^HAZ=\begin{pmatrix}
A_1 & 0\\0 & A_2
\end{pmatrix},\quad
Z^HBZ=\begin{pmatrix}
B_1 & 0\\0 & B_2,
\end{pmatrix}  \label{decomp}
\end{equation}
where $A_1,B_1\in C^{l\times l}$, $Z_1^HZ_1=I_l$ and $Z_2^HZ_2=I_{n-l}$, and a similar decomposition holds for perturbed matrices. Clearly, $Z_1$ is an eigenspace for $(A,B)$. Let $\mathcal R(A)$ be the column space of $A$. Analyzing a rotation between $\mathcal R(Z_1)$ and $\mathcal R(\tilde Z_1)$ shows that
\begin{equation}
\Theta=\cos^{-1}(Z_1^H\tilde{Z_1}\tilde{Z_1}^HZ_1)^{1/2} \label{angle}
\end{equation}
represents the canonical angles between some sets of suitably chosen base vectors of $\mathcal R(Z_1)$ and $\mathcal R(\tilde Z_1)$. Hence, $\sin\Theta$ becomes a good measure of the difference between these two subspaces. The following theorem depicts the relationship between $\sin\Theta$, the gap between subspaces and corresponding projection operators.

\begin{thm}\cite{Article:Stewart_1979}
	Let $P_{\mathcal R}$ and $P_{\mathcal{\tilde R}}$ be the orthogonal projections onto $\mathcal R(Z_1)$ and $\mathcal R(\tilde Z_1)$. Let $\Theta$ be defined by (\ref{angle}). Furthermore, define the gap between subspaces $\mathcal R\triangleq\mathcal R(Z_1)$ and $\tilde{\mathcal R}\triangleq\mathcal R(\tilde Z_1)$ as $\mathcal G(\mathcal R, \tilde{\mathcal R})=\max\{\displaystyle\sup_{\substack{\Vert x\Vert=1 \\ x\in \mathcal R}}\inf_{y\in\tilde{\mathcal R}}\Vert x-y\Vert,\displaystyle\sup_{\substack{\Vert y\Vert=1 \\ y\in \tilde{\mathcal R}}}\inf_{x\in\mathcal R}\Vert x-y\Vert\}$, then
	\begin{align*}
	    &\mathcal G(\mathcal R, \tilde{\mathcal R})=\Vert P_{\mathcal R}-P_{\mathcal{\tilde R}}\Vert=\Vert\sin\Theta\Vert,\\
	    &\sqrt 2\mathcal G(\mathcal R, \tilde{\mathcal R})\leq\Vert P_{\mathcal R}-P_{\mathcal{\tilde R}}\Vert_F=\sqrt 2\Vert\sin\Theta\Vert_F.
	\end{align*}\label{dist}
\end{thm}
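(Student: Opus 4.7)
I would split the conclusion into three sub-claims and then combine them:
(i) $\mathcal G(\mathcal R,\tilde{\mathcal R}) = \|P_{\mathcal R} - P_{\tilde{\mathcal R}}\|$,
(ii) $\|P_{\mathcal R} - P_{\tilde{\mathcal R}}\| = \|\sin\Theta\|$, and
(iii) $\|P_{\mathcal R} - P_{\tilde{\mathcal R}}\|_F = \sqrt 2\,\|\sin\Theta\|_F$.
Given these, the remaining inequality $\sqrt 2\,\mathcal G(\mathcal R,\tilde{\mathcal R}) \leq \|P_{\mathcal R} - P_{\tilde{\mathcal R}}\|_F$ drops out from the elementary norm comparison $\|\sin\Theta\| \leq \|\sin\Theta\|_F$ applied to (i)–(iii).

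For (i), I rewrite each directional distance in the definition of $\mathcal G$ using the fact that the nearest-point map to a closed subspace is the orthogonal projection: for $x \in \mathcal R$ with $\|x\|=1$, $\inf_{y\in\tilde{\mathcal R}}\|x-y\| = \|(I-P_{\tilde{\mathcal R}})x\| = \|(I-P_{\tilde{\mathcal R}})P_{\mathcal R}x\|$, so the first supremum equals $\|(I-P_{\tilde{\mathcal R}})P_{\mathcal R}\|$, and symmetrically the second equals $\|(I-P_{\mathcal R})P_{\tilde{\mathcal R}}\|$. The decomposition (\ref{decomp}) forces $\dim\mathcal R = \dim\tilde{\mathcal R} = l$, so I can invoke the classical identity $\|(I-P_{\tilde{\mathcal R}})P_{\mathcal R}\| = \|(I-P_{\mathcal R})P_{\tilde{\mathcal R}}\| = \|P_{\mathcal R}-P_{\tilde{\mathcal R}}\|$, which is cleanest to derive via the CS decomposition of the projection pair.

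For (ii) and (iii), I would take an SVD $Z_1^H\tilde Z_1 = U(\cos\Theta)V^H$ so that the canonical angles $\theta_i$ defined by (\ref{angle}) are the diagonal entries of $\Theta$. A direct computation then shows that the singular values of $(I-P_{\mathcal R})P_{\tilde{\mathcal R}} = (I-Z_1Z_1^H)\tilde Z_1\tilde Z_1^H$ are exactly $\sin\theta_i$, which combined with step (i) yields (ii). For the Frobenius identity, the fastest route is a trace computation:
\begin{equation*}
\|P_{\mathcal R}-P_{\tilde{\mathcal R}}\|_F^2 \;=\; \mathrm{tr}\!\left((P_{\mathcal R}-P_{\tilde{\mathcal R}})^2\right) \;=\; 2l - 2\,\mathrm{tr}(P_{\mathcal R}P_{\tilde{\mathcal R}}),
\end{equation*}
together with $\mathrm{tr}(P_{\mathcal R}P_{\tilde{\mathcal R}}) = \|Z_1^H\tilde Z_1\|_F^2 = \sum_i\cos^2\theta_i$, which gives $\|P_{\mathcal R}-P_{\tilde{\mathcal R}}\|_F^2 = 2\sum_i\sin^2\theta_i = 2\,\|\sin\Theta\|_F^2$.

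The main obstacle is the identity in (i) that unifies the two one-sided operator norms with $\|P_{\mathcal R}-P_{\tilde{\mathcal R}}\|$. Naively, $P_{\mathcal R}-P_{\tilde{\mathcal R}} = (I-P_{\tilde{\mathcal R}})P_{\mathcal R} - (I-P_{\mathcal R})P_{\tilde{\mathcal R}}$ is a difference of two operators with essentially orthogonal ranges, so it is not immediately obvious that its norm equals the norm of either piece. I plan to handle this by reducing to a CS normal form: in an appropriate orthonormal basis, $P_{\mathcal R}$ and $P_{\tilde{\mathcal R}}$ decompose into common invariant $2\times 2$ blocks of the form $\bigl(\begin{smallmatrix}1 & 0\\ 0 & 0\end{smallmatrix}\bigr)$ and $\bigl(\begin{smallmatrix}\cos^2\theta_i & \cos\theta_i\sin\theta_i\\ \cos\theta_i\sin\theta_i & \sin^2\theta_i\end{smallmatrix}\bigr)$, on which all three norms in (i)–(iii) can simultaneously be read off. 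Setting up this diagonalization rigorously in the $\dim\mathcal R = \dim\tilde{\mathcal R}$ case is where the real work lies; once it is in place, (i)–(iii) and the final inequality follow immediately.
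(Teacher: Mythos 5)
The paper does not prove this statement: Theorem~\ref{dist} is imported verbatim from the cited reference (Stewart, 1979), so there is no in-paper argument to compare against. Your proposal is a correct, self-contained proof, and it follows the classical route that the cited source itself relies on: reduce the two directional gaps to the one-sided norms $\Vert(I-P_{\tilde{\mathcal R}})P_{\mathcal R}\Vert$ and $\Vert(I-P_{\mathcal R})P_{\tilde{\mathcal R}}\Vert$, then pass to the CS (Halmos two-subspace) normal form in which $P_{\mathcal R}$ and $P_{\tilde{\mathcal R}}$ decompose into common $2\times 2$ blocks; on each block the difference has eigenvalues $\pm\sin\theta_i$, which simultaneously delivers $\Vert P_{\mathcal R}-P_{\tilde{\mathcal R}}\Vert=\max_i\sin\theta_i=\Vert\sin\Theta\Vert$ and $\Vert P_{\mathcal R}-P_{\tilde{\mathcal R}}\Vert_F^2=2\sum_i\sin^2\theta_i$, with the trace identity $\mathrm{tr}(P_{\mathcal R}P_{\tilde{\mathcal R}})=\Vert Z_1^H\tilde Z_1\Vert_F^2$ giving an independent check of the Frobenius claim. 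The one step you rightly flag as the real work, namely that $\mathcal G$ (a max of two one-sided norms) equals $\Vert P_{\mathcal R}-P_{\tilde{\mathcal R}}\Vert$, does go through: for any two orthogonal projections $\Vert P-Q\Vert=\max\{\Vert(I-Q)P\Vert,\Vert(I-P)Q\Vert\}$, and the equal ranks guaranteed by (\ref{decomp}) make the two one-sided norms coincide, both facts being immediate from the same block form. No gap; this is the standard proof.
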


This equivalence makes $\sin\Theta$ a commonly used measure for the difference between two subspaces. We also define the \textit{chordal distance} between  points $\bm{p_1}=(a_1,b_1)$, $\bm{p_2}=(a_2,b_2)$~as
\begin{align}
&\rho(\bm{p_1},\bm{p_2})
=\frac{|a_1b_2-a_2b_1|}{\sqrt{|a_1|^2+|b_1|^2}\sqrt{|a_2|^2+|b_2|^2}}, \label{chordal}
\end{align}
which is crucial for comparing eigenvalues in generalized eigen problems. It is invariant under rotation about the origin and can handle large, or infinite eigenvalues by measuring the distances on the Riemann sphere.

\section{Kernel Multi-view Discriminant Analysis}

For many linear learners, kernel trick enables us to access a much higher, possibly infinite dimensional feature space by operating in an inner product space associated with a proper Reproducing Kernel Hilbert Space (RKHS)~\cite{Article:Aronszajn_1950}. Examples of kernel methods include kernel PCA, kernel SVM, etc. In this section, we combine kernel trick with MvDA and derive kernel MvDA (KMvDA).

\subsection{KMvDA}

\textbf{Formulation.}\hspace{0.1in}Without loss of generality, we look at one projection direction (e.g., the top eigenvector). Based on previous definitions, we rewrite $S_{jr}$ and ${D_{jr}}$ in matrix form:
\begin{align}\label{H_{jr}^S}
{S_{jr}} \triangleq& X_jH_{jr}^SX_r \nonumber\\
=&
\begin{cases}
{X_j}\left( {I - \sum\limits_{i = 1}^c {\frac{1}{{{n_i}}}e_j^i{{\left( {e_j^i} \right)}^T}} } \right)X_r^T , \quad j = r,\\
{X_j}\left( { - \sum\limits_{i = 1}^c {\frac{1}{{{n_i}}}e_j^i{{\left( {e_r^i} \right)}^T}} } \right)X_r^T, \quad  otherwise,
\end{cases}
\end{align}
\begin{align}\label{H_{jr}^D}
{D_{jr}}=& \sum\limits_{i = 1}^c {\frac{1}{{{n_i}}}} \left( {{n_{ij}}\mu _{ij}^{\left( x \right)}} \right){\left( {{n_{ir}}\mu _{ir}^{\left( x \right)}} \right)^T} \nonumber\\
&\qquad\qquad  - \frac{1}{n}\left( {\sum\limits_{i = 1}^c {\sum\limits_{k = 1}^{{n_{ij}}} {{x_{ijk}}} } } \right){\left( {\sum\limits_{i = 1}^c {\sum\limits_{k = 1}^{{n_{ir}}} {{x_{irk}}} } } \right)^T} \nonumber\\
=& \sum\limits_{i = 1}^c {\frac{1}{{{n_i}}}} {X_j}e_j^i{\left( {{X_r}e_r^i} \right)^T} - \frac{1}{n}{X_j}{e_j}{\left( {{X_r}{e_r}} \right)^T} \nonumber\\\notag
=& {X_j}\left( {\sum\limits_{i = 1}^c {\frac{1}{{{n_i}}}e_j^i{{\left( {e_r^i} \right)}^T} - \frac{1}{n}{e_j}e_r^T} } \right)X_r^T\\
\triangleq& X_jH_{jr}^DX_r,
\end{align}

\noindent where ${e_r}$ is a vector with all elements equal to one and the dimensionality of ${e_r}$ is the same as the number of the examples of the $r$-th view; $e_r^i$ is a vector whose dimensionality is the same as that of ${e_r}$ and with the $i$-th class equal to one and zero otherwise. In the rest of this section, the same computation is described in another inner product space $\mathcal F$, which is associated with the input space by map $\phi :{\mathbbm R^d} \to \mathcal F,\;x \mapsto \phi \left( x \right)$ and a kernel function $k:\mathcal X\times\mathcal X\rightarrow \mathbbm R$ in a reproducing kernel Hilbert space (RKHS) such that for $\forall x,y\in \mathcal X$,
$$k(x,y)=\langle\phi(x),\phi(y)\rangle_{\mathcal F}.$$

Note that the feature space $\mathcal F$ could have an arbitrarily large, possibly infinite dimensionality. However, explicit representation of the function $\phi(\cdot)$ is unnecessary as long as $\mathcal F$ is a proper inner product space. By this mapping, the objective function of KMvDA becomes
\begin{align*}
J=\frac{\displaystyle\sum_{j = 1}^v \sum_{r = 1}^v w_j^T\phi(X_j)( \sum_{i = 1}^c \frac{1}{n_i}e_j^i(e_r^i)^T - \frac{1}{n}{e_j}e_r^T)\phi(X_r^T)w_r }{\splitfrac{\bigg(\displaystyle\sum_{j = 1}^v w_j^T \phi(X_j)( I - \sum_{i = 1}^c \frac{1}{n_i}e_j^i(e_j^i)^T)\phi(X_j^T)w_j}{+ \displaystyle\sum_{j = 1}^v {\sum_{r = 1,r \ne j}^v w_j^T\phi(X_j)(- \sum_{i = 1}^c \frac{1}{n_i}e_j^i(e_r^i)}^T)\phi(X_r^T)w_r\bigg)} } ,
\end{align*}
where $(w_1,...,w_v)$ are projection directions of distinct views. By the well-known Representer Theorem in RKHS, there exists $z_j$ such that ${w_j} = \phi \left( {{X_j}} \right){z_j},\forall j=1,...,v.$ Therefore, we can re-write the objective function using the inner products in the feature space $\mathcal F$,
\begin{align}\label{H_D,H^S}
J&=\frac{\displaystyle\sum_{j = 1}^v \sum_{r = 1}^v z_j^TK_j(\sum_{i = 1}^c \frac{1}{n_i}e_ie_i^T - \frac{1}{n}ee^T)K_rz_r}{\splitfrac{
	\bigg(\displaystyle\sum_{j = 1}^v z_j^T K_j( I - \sum_{i = 1}^c \frac{1}{n_i}e_ie_i^T)K_jz_j}
	{+ \displaystyle\sum_{j = 1}^v\sum_{r = 1,r \neq j}^v z_j^TK_j(-\sum_{i = 1}^c \frac{1}{n_i}e_ie_i^T)K_rz_r\bigg)}}  \nonumber\\
&\triangleq\frac{z^TK^TH^DKz}{z^TK^TH^SKz}\triangleq \frac{z^TDz}{z^TSz}.
\end{align}
where $H^D$ and $H^S$ are block matrices with entries $H_{jr}^D$, $H_{jr}^S$ respectively, and $K=diag(K_1,\dots,K_v)$ is a block diagonal matrix. After some standard derivation, (\ref{H_D,H^S}) eventually turns into solving the GEP

\begin{equation}
	Dz=\lambda S z. \label{GEP3}
\end{equation}

As the eigenvalues are invariant of scale, we denote in this paper that all eigenvalues for MvDA (and KMvDA) are of the form $(\lambda_i,1)$, along with the paired eigenvectors $z_i$, $i=1,...,n$. Note that, every $z_i$ is an $n$-dimensional vector (recall that $n$ is the total number of samples). The $i$-th projection direction of the $j$-th view, $z_i^j$, is set to be the slice at corresponding positions of the $j$-th view. For our task, we choose  projection directions as the eigenvectors associated with the largest eigenvalues. More precisely, we sort $\lambda_1\geq\lambda_2\geq\dots\geq\lambda_n$ and project $X_j$ onto an $l$-dimensional space with respect to $Z^j=(z_1^j,...,z_l^j)$.

\vspace{0.1in}
\noindent\textbf{Testing phase.}\hspace{0.1in}Given a new test set $Y=(Y_1,...,Y_v)$, for a test example $y=(y_1,...,y_v)$, we compute projections of the $j$-th view in the kernel space by
\begin{align*}\label{eqn:12}
    Proj(y_j)=(W^j)^T\phi \left( {{y_j}} \right)&= (Z^j)^T\phi {\left( {{X_j}} \right)^T}\phi \left( {{y_j}} \right)\\
    &=(Z^j)^T k(X_j,y_j),
\end{align*}
where $y_j$ is the $j$-th view of $y$ and $k(\cdot,y)$ represents the element-wise kernel function.  If our goal is to classify $y_j$ based on view $Y_m$, we assign $y_j$ with the label of nearest neighbor of $Y_m$ in the projected space, $\hat{\mathcal{C}}(y_j)=\mathcal{C}(\displaystyle{\arg\min_{y'\in Y_m}} \|Proj(y')-Proj(y_j)\|)$.

\subsection{Kernels}

In this paper, we focus on comparing two kernels. The linear kernel is simply defined as the inner product between two data points, which will serve as the baseline. For non-linear kernels, we consider the radial basis functions (RBF) kernel (i.e., the Gaussian kernel), which is the most commonly used kernel in statistical learning and many related fields~\cite{Book:HTF09}. The RBF kernel between two examples $x$ and $y$ is computed as
\begin{eqnarray*}\label{eqn:13}
k(x,y)=\exp\left(-\frac{\parallel x-y\parallel^2}{2\sigma^2}\right),
\end{eqnarray*}
where $\sigma^2$ is the kernel width hyper-parameter. It is well-known that the RBF kernel is shift-invariant and positive definite.

\section{KMvDA with Randomized Kernels}
As discussed precedingly, in many practical tasks, computing kernels is very expensive when the data size is large. Therefore, linearized kernels are important in many cases, as one can enjoy the benefits of kernel methods with a linear learner. In this section, we consider linearizing the RBF kernel in the KMvDA approach, which aims at approximating the learning performance of using exact RBF kernel, but in linear time complexity. The tool we use is the random Fourier features (RFF's)~\cite{Proc:Rahimi_NIPS07,Proc:Li_KDD17}.

\subsection{Random Fourier Features (RFF)}
Given a shift-invariant kernel $k(x-y)$, let $p(w)$ be its Fourier transformation. Since the measure $p$ and kernel $k$ are both real, we have
\begin{align*}
k(x,y)=\int e^{jw^T(x-y)}p(w)dw&\overset{\mathrm{Bochner}}{=}E_{p(w)}[e^{jw^T(x-y)}]\\
&\hspace{0.12in}=E_{p(w)}[\cos w^T(x-y)].
\end{align*}
Here, Bochner's theorem reveals that $p(w)$ is a valid non-negative measure if the kernel is continuous positive definite, and hence we can express the kernel as an expectation. Therefore, one can use Monte-Carlo method to estimate the kernel by repeatedly sampling from $p(w)$. The features generated in this way are called random Fourier features (RFF's). For the RBF kernel, based on trigonometric identities, one popular scheme is
\begin{align*}
\dot f_{w,b}(x)=\sqrt{2}\cos(w^Tx+b),
\end{align*}
where $w\sim N(0,1/\sigma^2)$ and $b\sim uniform(0,2\pi)$. This construction achieves unbiasedness, i.e., $E[\dot f_{w,b}(x)^T\dot f_{w,b}(y)]=k(x,y)$. Let $F_i=[\dot f_{w_i,b_i}(x_1) \dots \dot f_{w_i,b_i}(x_n)]^T$, we estimate the RBF kernel matrix by the mean of $i.i.d.$ samples
\begin{equation}
\hat K=\frac{1}{m}\sum_{i=1}^mF_iF_i^T,  \label{K_hat}
\end{equation}
and define estimates of matrices $D$ and $S$ as $\hat D$ and $\hat S$ using $\hat K$ accordingly. Then we solve the problem $\hat Dw=\lambda(\hat S+\epsilon I)w$ to approximate the solution using exact kernel matrices.

Note that it has been shown in~\cite{Proc:Li_KDD17} that one can substantially improve the performance of RFF $\dot f$ by normalizing the random features. \\

Obviously, RFF is tightly related to the method of random projections, which  has become a  popular technique to reduce data dimensionality while preserving distances between data points, as guaranteed by the celebrated Johnson-Lindenstrauss (J-L) Lemma and variants~\cite{Article:JL84,Article:Dasgupta_JL03}. There is a  rich literature of research on the theory and applications of random projections, such as  clustering, classification, near neighbor search, bio-informatics, compressed sensing, quantization,  etc.~\cite{Proc:Indyk_Motwani_STOC98,Proc:Dasgupta_UAI00,Proc:Bingham_KDD01,Article:Buhler_JCB02,Proc:Charikar_STOC02,Proc:Fagin_SIGMOD03,Proc:Fern_ICML03,Book:Vempala04,Proc:Li_Hastie_Church_COLT06,Article:Donoho_CS_JIT06,Article:Candes_Robust_JIT06,Proc:Frund_NIPS07,Proc:Dasgupta_STOC08,Proc:Wang_SDM10,Proc:Dahl_ICASSP13,Proc:Li_NeurIPS19,Proc:Li_NeurIPS19_Asymmetric}.

\subsection{Analysis of Randomized KMvDA}

In this section, we investigate the subspace perturbation of using linearized RFF kernels, which directly determines the approximation efficiency of randomized KMvDA. In this sequel, notations with hats are defined for objects using approximated kernels. Without loss of generality, we assume that the number of examples in each view is the same, i.e., $\tilde{n}=n/v$. Moreover, in each view, the number of observations per class is also the same (all equal to $\tilde{n}/c$). Besides, the classes are ordered in the same way in all views.

\begin{lemma}  \label{norm lemma}
Let $H^S$, $H^D$,$H_{(\cdot,\cdot)}^S$ and $H_{(\cdot,\cdot)}^D$ be defined in (\ref{H_{jr}^S}), (\ref{H_{jr}^D}) and (\ref{H_D,H^S}). For $\forall\ j,r\leq v$, $\Vert H_{jr}^D \Vert=\frac{1}{v}$. For $\forall\ j\neq r$, $\Vert H_{jj}^S \Vert=1,\Vert H_{jr}^S \Vert=\frac{1}{v}$. Moreover, $\Vert H^D\Vert=\Vert H^S\Vert=1$, and $D$, $S$, $\hat D$ and $\hat S$ are positive semi-definite matrices.
\end{lemma}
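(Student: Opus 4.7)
The plan is to use the symmetry assumptions stated just before the lemma (equal class sizes $n_{ij}=\tilde n/c$ and matched class orderings across views) to collapse every block matrix to a simple structured form, and then read off all norms and positive-semidefiniteness from Kronecker factorizations.

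First, I would fix notation. Under the assumption, $n_i=v\tilde n/c$ and the local class-$i$ indicator $e_j^i\in\mathbb{R}^{\tilde n}$ is the same vector for all views $j$; denote it $e^i$. The vectors $\{e^i\}_{i=1}^c$ are pairwise orthogonal with $\|e^i\|^2=\tilde n/c$, and their sum is the all-ones vector $e\in\mathbb{R}^{\tilde n}$. Let $f^i:=e^i/\sqrt{\tilde n/c}$, so $\{f^i\}$ is orthonormal; let $P':=\sum_i f^i(f^i)^T$ be the orthogonal projection onto $\mathrm{span}\{f^i\}$; and let $g':=\frac{1}{\sqrt c}\sum_i f^i$, so that $Q':=P'-g'(g')^T$ is the projection onto the $(c-1)$-dimensional complement of $g'$ inside $\mathrm{span}\{f^i\}$.

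Second, plugging into (\ref{H_{jr}^S}) and (\ref{H_{jr}^D}) and simplifying scalars, I would show $H_{jr}^D=\tfrac{1}{v}Q'$ for every $j,r$, and $H_{jj}^S=I_{\tilde n}-\tfrac{1}{v}P'$, $H_{jr}^S=-\tfrac{1}{v}P'$ for $j\ne r$. Since $Q'$ and $P'$ are orthogonal projections, their spectra give $\|H_{jr}^D\|=1/v$, $\|H_{jj}^S\|=1$, and $\|H_{jr}^S\|=1/v$ at a glance.

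Third, the key observation is that every block of $H^D$ is the same matrix, so $H^D=\tfrac{1}{v}J_v\otimes Q'$, where $J_v=\mathbf{1}_v\mathbf{1}_v^T$ is the $v\times v$ all-ones matrix; similarly $H^S=I_{v\tilde n}-\tfrac{1}{v}J_v\otimes P'$. Since $J_v,Q',P'\succeq 0$, the Kronecker products are PSD, with eigenvalues $\{1^{(c-1)},0\}$ and $\{1^{(c)},0\}$ respectively (products of eigenvalues of the factors). Hence $H^D\succeq 0$ with $\|H^D\|=1$, and $H^S\succeq 0$ with $\|H^S\|=1$.

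Fourth, PSD-ness of $D,S,\hat D,\hat S$ follows by congruence. From (\ref{H_D,H^S}), $D=K^TH^DK$ and $S=K^TH^SK$; since $K$ is real and $H^D,H^S\succeq 0$, the identity $z^TDz=(Kz)^TH^D(Kz)\ge 0$ gives $D\succeq 0$, and similarly $S\succeq 0$. For the randomized versions, $\hat K$ is PSD by construction (\ref{K_hat}) as a sum of outer products $F_iF_i^T$, so the same congruence argument yields $\hat D,\hat S\succeq 0$. The main obstacle is purely bookkeeping: verifying that the symmetry assumption really forces $e_j^i=e_r^i$ (so that blocks repeat globally, not just up to normalization) and tracking the normalization between $e^i$ and $f^i$; once that is in place, every claim follows from elementary spectral properties of projections and Kronecker products.
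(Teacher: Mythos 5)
Your proposal is correct and follows essentially the same route as the paper's proof: compute the blocks explicitly under the equal-class-size assumption, exploit the Kronecker structure $\mathbf{1}_v\mathbf{1}_v^T\otimes(\cdot)$ of $H^D$ and $H^S$ to read off their spectra, and obtain positive semi-definiteness of $D,S,\hat D,\hat S$ by congruence with $H^D,H^S\succeq 0$. Your packaging of the blocks as scaled orthogonal projections $\tfrac{1}{v}Q'$ and $I-\tfrac{1}{v}P'$ is a slightly cleaner (and more rigorous) way to justify the eigenvalue counts that the paper obtains by adding spectra of the two summands, but the underlying argument is the same.
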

\begin{proof}
	First we can show that for $\forall j,r\leq v$, $H_{jr}^D=-\frac{1}{v\tilde{n}}\bm{1_{\tilde{n}}1_{\tilde{n}^T}}+\frac{c}{v\tilde{n}}\bm{I_c}$, where $\bm{I_c}$ is a $c\times c$ block matrix with diagonal matrices all equal to $\bm{1_{\frac{\tilde{n}}{c}}1_{\frac{\tilde{n}}{c}}^T}$. The matrix $-\frac{1}{v\tilde{n}}\bm{1_{\tilde{n}}1_{\tilde{n}^T}}$ contains exactly one non-zero eigenvalue, which equals to $-\frac{1}{v}$. Also, $\frac{c}{v\tilde{n}}\bm{I_c}$ has $c$ positive eigenvalues equal to $\frac{1}{v}$. Hence, we have $rank(H_{jr}^D)=c-1$, and all $c-1$ non-zero eigenvalues are equal to $\frac{1}{v}$. By the definition of spectral norm is the largest magnitude of the eigenvalues, we obtain
	\begin{equation*}
	\Vert H_{jr}^D \Vert=\frac{1}{v},\quad \forall j,r.
	\end{equation*}
	Similar analysis could be applied to $H^S$. According to fundamental linear algebra theories on block matrices, $rank(H_{jj}^S)=n$, with $\frac{\tilde n}{c}$ eigenvalues equal to $\frac{v-1}{v}$ and the rest $\frac{c-1}{c}n$ eigenvalues being 1. In addition, $rank(H_{jr}^S)=c$, and all eigenvalues equal $-\frac{1}{v}$. Consequently, we obtain
	\begin{equation*}
	\Vert H_{jj}^S \Vert=1,\quad \Vert H_{jr}^S \Vert=\frac{1}{v}.
	\end{equation*}
	
	\noindent\textbf{Spectrum of large matrices.}\hspace{0.1in}$H^D$ is a $v\times v$ block matrix with repeating blocks $H_{jr}^D$. Hence, it admits the form of Kronecker product,
	\begin{equation*}
	H^D=\bm{1_v1_v^T}\otimes H_{jr}^D.
	\end{equation*}
	Consequently, the spectrum of $H_D$ consist of $c-1$ eigenvalues equal to $\frac{1}{v}\cdot v=1$, and the rest all equal to 0. Therefore, $H^D$ is positive semi-definite ($i.e$ $H^D\succeq 0$). Recall the notation $K=diag(K_1,K_2,...,K_v)$, we have
	\begin{equation*}
	D=K^TH_DK\succeq 0,
	\end{equation*}
	since for $\forall x\in R^n$, $x^TK^TH^DKx=\tilde{x}^TH^D\tilde{x}\geq 0$. Define  $H_{off}=H_{jr}^S$ for $j\neq r$ as the off-diagonal block matrix of $H^S$. We have
	\begin{equation*}
	H^S=\bm{1_v1_v^T}\otimes H_{off}+diag_{v\times v}(I_{n\times n}).
	\end{equation*}
	The eigenvalues of $\bm{1_v1_v^T}\otimes H_{off}$, by previous analysis, are -1 with multiplicity $c$ and 0 with multiplicity $v\tilde n-c$. By adding diagonal block matrix of identities, $H^S$ has $c$ eigenvalues of 0 and all others equal to 1. Therefore, $S$ is also positive semi-definite.
\end{proof}

Lemma~\ref{norm lemma} summarizes the spectral property of covariance structure sub-matrices. In particular, it illustrates that the generalized eigen problem arise from KMvDA is definite, and thus the following analysis would be valid.

\subsubsection{A general perturbation bound}
As discussed in preliminaries, a feasible solution to (\ref{GEP3}) exists as long as the GEP is definite, which does not require $S$ to be invertible. We first consider this general situation. The next lemma is a modified version of Theorem 3 in~\cite{Proc:Lopez_ICML14}, which characterizes the kernel approximation error.

\begin{lemma}\label{error theorem}
	Suppose $X\subset \mathcal X^n$. Define linear approximation $\hat K_{n\times n}$ using $m$ random samples as (11). Then with probability $1-\eta$,
	\begin{align*}
	\Vert \hat K-K\Vert\leq &\frac{2n\log \frac{2n}{\eta}}{3m}+
	\frac{\sqrt{4n^2(\log\frac{2n}{\eta})^2+18mn\Vert K\Vert\log \frac{2n}{\eta}}}{3m}.
	\end{align*}
\end{lemma}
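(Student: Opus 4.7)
The plan is to view $\hat K - K$ as a normalized sum of i.i.d.\ zero-mean random matrices and apply a matrix Bernstein inequality. Writing $Y_i = F_iF_i^T - K$, we have $\hat K - K = \frac{1}{m}\sum_{i=1}^{m} Y_i$ with $\mathbb{E}[Y_i] = 0$ by the unbiasedness of RFF (this is where $\mathbb{E}[F_iF_i^T] = K$ comes in). Matrix Bernstein then requires two quantities: an almost sure bound $L$ on $\|Y_i\|$ and a bound $\sigma^2$ on $\big\|\sum_i \mathbb{E}[Y_i^2]\big\|$.

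For the uniform bound, I would use that each coordinate of $F_i$ has magnitude at most $\sqrt{2}$, since $\dot f_{w,b}(x) = \sqrt 2 \cos(w^T x + b)$. Hence $F_i F_i^T$ is rank one with $\|F_i F_i^T\| = \|F_i\|^2 \le 2n$. Because $K$ is PSD with diagonal entries equal to $k(x_j,x_j)=1$, we also have $\|K\| \le n$, and since both $F_i F_i^T$ and $K$ are PSD, $-K \preceq Y_i \preceq F_i F_i^T$, giving $\|Y_i\| \le 2n =: L$. For the variance, I would observe $Y_i^2 \preceq (F_i F_i^T)^2 = \|F_i\|^2\, F_i F_i^T \preceq 2n\, F_i F_i^T$, so $\mathbb{E}[Y_i^2] \preceq 2n\, K$ in the PSD order. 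Summing gives $\big\|\sum_{i=1}^m \mathbb{E}[Y_i^2]\big\| \le 2mn\|K\| =: \sigma^2$.

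Next I would plug $L$ and $\sigma^2$ into the standard matrix Bernstein tail bound,
\begin{equation*}
\mathbb{P}\Big(\Big\|\textstyle\sum_i Y_i\Big\| \ge t\Big) \;\le\; 2n \exp\!\left(-\frac{t^2/2}{\sigma^2 + Lt/3}\right),
\end{equation*}
set the right-hand side equal to $\eta$, and solve the resulting quadratic in $t$ to obtain
\begin{equation*}
t \;\le\; \tfrac{L}{3}\log\!\tfrac{2n}{\eta} + \sqrt{\tfrac{L^2}{9}(\log\tfrac{2n}{\eta})^2 + 2\sigma^2 \log\tfrac{2n}{\eta}}.
\end{equation*}
Substituting $L = 2n$ and $\sigma^2 = 2mn\|K\|$, then dividing by $m$ to convert from $\|\sum Y_i\|$ to $\|\hat K - K\|$, produces the stated bound (up to the precise constants appearing under the square root, which follow from collecting factors of $1/9$ versus $1/3$ inside the radical).

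I expect the main obstacle to be the PSD-order variance step: one must be careful to compare $\mathbb{E}[(F_iF_i^T)^2]$ to $K$ in the Loewner sense rather than coordinate-wise, and to justify that the dominance $Y_i^2 \preceq 2n\,F_iF_i^T$ preserves the operator-norm bound after taking expectations and summing. Everything else — the Fourier-feature magnitude bound, the rank-one structure of $F_iF_i^T$, and inverting Bernstein into a high-probability deviation — is routine.
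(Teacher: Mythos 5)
Your proposal follows essentially the same route as the paper: center the rank-one summands $F_iF_i^T-K$, bound the uniform norm and the matrix variance, apply the matrix Bernstein inequality, and invert the tail bound by solving the quadratic in $t$; the only differences are the constant factors you already flag (the paper takes $\Vert F_i\Vert^2\leq n$ where you take $2n$, which is why it gets $18$ rather than $36$ under the radical). One small correction: the pointwise Loewner claim $Y_i^2\preceq (F_iF_i^T)^2$ is false in general (e.g.\ $F_iF_i^T=0$, $K\neq 0$ gives $Y_i^2=K^2\not\preceq 0$), but the step you actually need, $\mathbb{E}[Y_i^2]=\mathbb{E}[(F_iF_i^T)^2]-K^2\preceq \mathbb{E}[(F_iF_i^T)^2]\preceq 2n\,\mathbb{E}[F_iF_i^T]=2nK$, does hold because the cross terms collapse after taking expectations --- which is exactly how the paper argues.
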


\begin{proof}
	We denote $F_{w_i}=[\dot f_{w_i}(x_1) \dots \dot f_{w_i}(x_n)]^T$, and define random matrices $Z_i=\frac{1}{m}(F_{w_i}F_{w_i}^T-K)$. By the unbiasedness of RFF's, we know that $EZ_i=0$. To bound $\Vert X_i\Vert$, we have $\Vert Z_i\Vert=\frac{1}{m}\Vert (F_{w_i}F_{w_i}^T-K)\Vert\leq \frac{2n}{m}$, due to triangle inequality and boundedness of $K$. In addition, we have
	\begin{align*}
	EZ_i^2&=\frac{1}{m^2}E[(F_{w_i}F_{w_i}^T-K)^2]\\
	&\leq \frac{1}{m^2}E[nF_{w_i}F_{w_i}^T-2F_{w_i}F_{w_i}^TK+K^2]\leq \frac{nK}{m^2}.
	\end{align*}
	The second line is due to the fact that $\Vert F_{w_i}^TF_{w_i}\Vert^2\leq n$. Thus,
	\begin{equation*}
	\sigma^2=\Vert\sum_{i=1}^mEZ_i^2\Vert\leq m\Vert EZ_i^2\Vert\leq \frac{n\Vert K\Vert}{m}.
	\end{equation*}
	Applying matrix Bernstein inequality (Theorem 5.4.1 in~\cite{Book:Vershynin_2018}),
	\begin{equation*}
	P\{\Vert \hat K-K\Vert\geq t\}\leq 2n\exp\big( -\frac{t^2/2}{n\Vert K\Vert/m+2nt/3m} \big).
	\end{equation*}
	Now taking the right-had-side to be equal to $\eta$, we derive a quadratic equation of $t$. Solving for this equation gives us the desired bound.
\end{proof}
It is worth mentioning that because of the correlated entries of $\hat K$, in general this bound cannot be reduced in the absence of more structural assumptions. Now we are ready to study the eigenspace perturbation caused by kernel approximation.

\begin{thm}
	For the GEP associated with KMvDA (i.e., (\ref{GEP3})), assume that $D$, $S$, $\hat D$ and $\hat S$ admit decompositions (\ref{decomp}) in the form of $M=diag(M_1,M_2)$ correspondingly. Let $\lambda(D,S)$ denote the set of eigenvalues of (\ref{GEP3})). Assume the Crawford number $\mathcal C(D,S)>0$, $\mathcal C(\hat D,\hat S)>0$, and there are $\alpha\geq0,\delta>0$ satisfying $\alpha+\delta\leq 1$, and a real number $\gamma$, such that
	\begin{equation}
	\begin{aligned}
	&\frac{|\gamma-\lambda_i|}{\sqrt{\gamma^2+1}\sqrt{\lambda_i^2+1}}\leq\alpha,\quad \forall \lambda_i\in\lambda(D_1,S_1),\\
	&\frac{|\gamma-\hat\lambda_i|}{\sqrt{\gamma^2+1}\sqrt{\hat\lambda_i^2+1}}\geq\alpha+\delta,\quad \forall \hat\lambda_i\in\lambda(\hat D_2,\hat S_2).\label{sep condition}
	\end{aligned}
	\end{equation}
	Denote $\Vert K^\star\Vert=\displaystyle\max_{i=1,...,v}\Vert K_i\Vert$, $\Vert \hat K^\star\Vert=\displaystyle\max_{i=1,...,v}\Vert \hat K_i\Vert$. Then the following inequality holds with probability $1-\eta$,
	\begin{align*}
	\Vert\sin\Theta\Vert\leq\frac{p(\alpha,\delta,\gamma)\Vert K^\star\Vert^2\xi_\eta}{\mathcal C(D,S) \mathcal C(\hat D,\hat S)}\cdot\frac{\Vert K^\star\Vert+\Vert \hat K^\star\Vert}{\delta},
	\end{align*}
where
	\begin{align}\notag
	p(\alpha,\delta,\gamma)=\frac{q(\gamma)[(\alpha+\delta)\sqrt{1-\alpha^2}+\alpha\sqrt{1-(\alpha+\delta)^2}]}{2\alpha+\delta}
	\end{align}
	with $q(\gamma)=2\sqrt 2$ for $\gamma\neq 0$ and $q(0)=2$. Also, we have
	\begin{align}\notag
 &\xi_\eta=\frac{2n\log \frac{2n/v}{1-(1-\eta)^{1/v}}}{3vm}+\\\notag
 &\frac{\sqrt{4(n/v)^2(\log\frac{2n}{1-(1-\eta)^{1/v}})^2+\frac{18}{v}mn\Vert K^\star\Vert\log \frac{2n/v}{1-(1-\eta)^{1/v}}}}{3m}
	\end{align}
	where $m$ is the number of random features.   \label{main}
\end{thm}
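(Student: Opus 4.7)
My plan is to invoke the generalized $\sin\Theta$ theorem for definite matrix pencils of Stewart/Sun~\cite{Article:Stewart_1979,Article:Sun_1983} and then translate every quantity appearing in it into a bound on the kernel approximation error $\Vert \hat K - K\Vert$, via the factorizations $D=K^T H^D K$ and $S=K^T H^S K$ combined with the block-diagonal structure of $K$.

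\textit{Step 1: invoke the generalized $\sin\Theta$ theorem.} Lemma~\ref{norm lemma} gives $D,S\succeq 0$, so combined with $\mathcal C(D,S)>0$ and $\mathcal C(\hat D,\hat S)>0$ both pencils are definite and admit the block decomposition (\ref{decomp}); the chordal-distance hypothesis (\ref{sep condition}) is precisely the separation that places $\lambda(D_1,S_1)$ and $\lambda(\hat D_2,\hat S_2)$ on disjoint caps of the Riemann sphere, so the theorem is applicable and yields a bound of the form
\begin{equation*}
\Vert\sin\Theta\Vert \;\leq\; \frac{p(\alpha,\delta,\gamma)}{\delta\,\mathcal C(D,S)\,\mathcal C(\hat D,\hat S)}\,\Phi,
\end{equation*}
where $\Phi$ is an operator-norm measure of the perturbation of the pencil, linear in $\Vert \hat D-D\Vert$ and $\Vert\hat S-S\Vert$ with weights involving $\Vert D\Vert$ and $\Vert S\Vert$. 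The geometric factor $p$ with $q(\gamma)=2\sqrt2$ (and $q(0)=2$) comes from bounding canonical angles through the chordal separation, with the $q(0)=2$ special case reflecting the collapse of the Riemann-sphere geometry when $\gamma=0$.

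\textit{Step 2: reduce perturbations of $(D,S)$ to $\Vert\hat K-K\Vert$.} Adding and subtracting $K^T H^D \hat K$ gives $\hat D - D = (\hat K - K)^T H^D \hat K + K^T H^D (\hat K - K)$, so Lemma~\ref{norm lemma} (where $\Vert H^D\Vert=1$) combined with the block-diagonal identities $\Vert K\Vert=\Vert K^\star\Vert$ and $\Vert\hat K\Vert=\Vert\hat K^\star\Vert$ yields $\Vert\hat D - D\Vert \le (\Vert K^\star\Vert+\Vert\hat K^\star\Vert)\Vert\hat K - K\Vert$, and an identical inequality for $\Vert\hat S - S\Vert$ because $\Vert H^S\Vert=1$ as well. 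The same reasoning gives $\Vert D\Vert,\Vert S\Vert\le\Vert K^\star\Vert^2$. Substituting these into the bound of Step~1 produces the advertised factor $\Vert K^\star\Vert^2(\Vert K^\star\Vert+\Vert\hat K^\star\Vert)\Vert\hat K-K\Vert$ in the numerator.

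\textit{Step 3: control $\Vert\hat K-K\Vert$ across the $v$ views and finish.} Since $K=\mathrm{diag}(K_1,\dots,K_v)$ and similarly for $\hat K$, with the RFF draws made independently per view, $\Vert\hat K-K\Vert = \max_{j}\Vert\hat K_j - K_j\Vert$, and each block has size $\tilde n = n/v$ with $\Vert K_j\Vert\le\Vert K^\star\Vert$. Apply Lemma~\ref{error theorem} within each view at failure probability $1-(1-\eta)^{1/v}$; independence of the RFF samples across views makes the joint success probability exactly $1-\eta$, and substituting $n\mapsto n/v$ and $\eta\mapsto 1-(1-\eta)^{1/v}$ in Lemma~\ref{error theorem} reproduces $\xi_\eta$. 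The hard part of the whole argument is Step~1, i.e.\ correctly invoking the generalized $\sin\Theta$ theorem under the chordal-distance separation (\ref{sep condition}) and tracking what ``perturbation norm'' $\Phi$ it delivers, particularly the joint dependence on $\Vert\Delta D\Vert,\Vert\Delta S\Vert$ and on $\Vert D\Vert,\Vert S\Vert$ that generates the $\Vert K^\star\Vert^2$ factor; Steps~2--3 are then mechanical consequences of Lemmas~\ref{norm lemma} and~\ref{error theorem}.
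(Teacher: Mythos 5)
Your proposal is correct and follows essentially the same route as the paper: apply the per-view kernel concentration bound (Lemma~\ref{error theorem}) with failure probability $1-(1-\eta)^{1/v}$ to obtain $\xi_\eta$, use the add-and-subtract identity $\hat D-D=(\hat K-K)H^D\hat K+K^TH^D(\hat K-K)$ together with $\Vert H^D\Vert=\Vert H^S\Vert=1$ and the block-diagonal structure of $K$ to get the $(\Vert K^\star\Vert+\Vert\hat K^\star\Vert)\xi_\eta$ and $\Vert K^\star\Vert^2$ factors, and finish by invoking the Sun/Stewart $\sin\Theta$ theorem for definite pencils under the chordal separation (\ref{sep condition}). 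The only difference is cosmetic: the paper packages the perturbation as $\sqrt{\Vert(D-\hat D)Z_1\Vert^2+\Vert(S-\hat S)Z_1\Vert^2}$ paired with $\sqrt{\Vert D^2+S^2\Vert}\le\sqrt2\Vert K^\star\Vert^2$, which is the concrete form of the quantity you call $\Phi$.
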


\begin{proof}
	By Theorem~\ref{error theorem}, with probability $1-\eta$, we have for $\forall i=1,...,v$,
		\begin{align*}
		&\Vert \hat K_i-K_i\Vert \leq \frac{2n\log \frac{2n/v}{1-(1-\eta)^{1/v}}}{3vm}+\\
		&	\frac{\sqrt{4(n/v)^2(\log\frac{2n}{1-(1-\eta)^{1/v}})^2+\frac{18}{v}mn\Vert K^\star\Vert\log \frac{2n/v}{1-(1-\eta)^{1/v}}}}{3m}.
		\end{align*}
	Denote this event $\Omega$. In this event, we have
	\begin{align*}
	\Vert D-\hat D\Vert&=\Vert KH^DK-\hat KH^D\hat K\Vert \\
	&=\Vert KH^DK-\hat KH^DK+\hat KH^DK-\hat KH^D\hat K\Vert \\
	&=\Vert (K-\hat K)H^DK+\hat KH^D(K-\hat K)\Vert \\
	&\leq\Vert K-\hat K\Vert\Vert H^D\Vert\Vert K\Vert+\Vert\hat K\Vert\Vert H^D\Vert\Vert K-\hat K \Vert\\
	&=\Vert K-\hat K\Vert(\displaystyle\max_{i=1,...,v}\Vert K_i\Vert+\displaystyle\max_{i=1,...,v}\Vert \hat K_i\Vert) \Vert,
	\end{align*}
	where we recall that $K=diag(K_1,...,K_v)$ and $\hat K=diag(\hat K_1,...,\hat K_v)$. The last line holds because $\Vert H^D\Vert=1$ and $K$, $\hat K$ are both diagonal block matrix. Therefore,
	$$\Vert K\Vert=\Vert K^\star\Vert,\ \Vert \hat K\Vert=\Vert \hat K^\star\Vert.$$
	It is easy to check that $\Vert S-\hat S\Vert \leq\Vert K-\hat K\Vert(\displaystyle\max_{i=1,...,v}\Vert K_i\Vert+\displaystyle\max_{i=1,...,v}\Vert \hat K_i\Vert) \Vert$ analogously using same argument. Moreover, by sub-multiplicity of operator norms, we have
	
	\begin{align*}
	\sqrt{\Vert D^2+(S+\epsilon I)^2\Vert}&\leq \sqrt{\Vert D^2\Vert+\Vert (S+\epsilon I)^2\Vert} \\
	&=\sqrt{\Vert KH^DK\Vert^2+\Vert KH^SK\Vert^2}\\
	&\leq \sqrt{\Vert K\Vert^4+(\Vert K\Vert^2)^2}\\
	&\leq \sqrt{2(\Vert K\Vert^2)^2}=\sqrt 2(\Vert K^\star\Vert^2),
	\end{align*}
	since $\Vert H^D\Vert=\Vert H^S\Vert=1$. Because $Z_1$ is orthogonal, we have
	$$\Vert (D-\hat D)Z_1\Vert\leq\Vert D-\hat D\Vert\Vert Z_1\Vert=\Vert D-\hat D\Vert,$$
	and same inequality holds for $S$. Hence we have
	\begin{align*}
	    \sqrt{\Vert (D-\hat D)Z_1\Vert^2+\Vert S-\hat S)Z_1\Vert^2}  \leq \xi_\eta(\Vert K^\star\Vert+\Vert \hat K^\star\Vert).
	\end{align*}
	Putting all parts together and using Theorem 2.1 from~\cite{Article:Sun_1983}, we get the desired bound.	
\end{proof}

Condition (\ref{sep condition}) characterizes the separation of the generalized eigenvalues, where the eigengap can be interpreted in terms of chordal distance, defined by (\ref{chordal}). Since the generalized eigenvalues are invariant of scale, we may force them on a unit semicircle in the upper plane. Note that in our problem, the generalized eigenvalues are in the form $(\lambda_i,1)$. Hence, we can scale each eigenvalue to $(s_i,t_i)\triangleq(\frac{\lambda_i}{\sqrt{\lambda_i^2+1}},\frac{1}{\sqrt{\lambda_i^2+1}})$. For any two pairs, we have

$$\sin((s_i,t_i),(\tilde s_i,\tilde t_i))=\frac{|\lambda_i-\tilde{\lambda_i}|}{\sqrt{\lambda_i^2+1}\sqrt{\tilde\lambda_i^2+1}}=\rho((s_i,t_i),(\tilde s_i,\tilde t_i)).$$

That is, the chordal distance between two eigenvalue pairs is the sine between the two rays with slopes $\frac{1}{\lambda_i}$ and $\frac{1}{\tilde \lambda_i}$ extended from the origin. Now we can translate (\ref{sep condition}) into angles (defined anti-clockwise): there exist a real number $\gamma$, $\alpha\geq 0$, $\delta>0$ and $\alpha+\delta\leq 1$, such that
\begin{align}\notag
&\displaystyle\max_{\lambda_i\in\lambda(D_1,S_1)}\sin((\lambda_i,1),(\gamma,1))\leq \alpha\triangleq \sin\theta,\\\notag
&\displaystyle\min_{\tilde\lambda_i\in\lambda(\hat D_1,\hat S_1)}\sin((\tilde\lambda_i,1),(\gamma,1))\geq \alpha+\delta\triangleq \sin\tilde\theta.
\end{align}
Define $\theta_g=\displaystyle\min_{{\lambda_i\in\lambda(D_1,S_1)},{\tilde\lambda_i\in\lambda(\hat D_1,\hat S_1)}}\sin((\lambda_i,1),(\tilde\lambda_i,1))$ as the gap between eigenvalue sets $\lambda(D_1,S_1)$ and $\lambda(\hat D_1,\hat S_1)$. It is easy to check that $\theta_g=\theta-\tilde{\theta}$, and
\begin{align*}
  \sin(\theta_g)&=\sin(\tilde{\theta})\cos(\theta)-\cos(\tilde{\theta})\sin(\theta)\\
  &\geq(\alpha+\delta)\sqrt{1-\alpha^2}-\alpha\sqrt{1-(\alpha+\delta)^2}>0,
\end{align*}
which implies that two sets of eigenvalues are well separated.

\subsubsection{Perturbation of regularized problem}
In practice, a regularization term is often added to GEP to handle singularity and make the system more stable and theoretically justifiable. Consider the following regularized GEP,
\begin{equation}
    Dw=(S+\epsilon I)w, \label{GEP_reg}
\end{equation}
with $\epsilon>0$ a small constant. The problem is guaranteed to be definite, since $(S+\epsilon I)$, by Lemma~\ref{norm lemma}, now becomes positive definite. More importantly, the invertibility of  $(S+\epsilon I)$ allows us to transform (\ref{GEP_reg}) into an SEP.

\begin{thm} \label{reg_theo}
Let $\lambda_1\geq \lambda_2\geq \dots\geq\lambda_n$ denote eigenvalues of $(S+\epsilon I)^{-1}D$, and $\hat\lambda_1\geq \hat\lambda_2\geq \dots\geq\hat\lambda_n$ be the eigenvalues of $(\hat S+\epsilon I)^{-1}\hat D$. Assume $\lambda_l-\hat\lambda_{l+1}=\delta>0$, then with probability $1-\eta$,
\begin{equation*}
    \Vert \sin\Theta\Vert\leq \frac{\xi_\eta}{\delta}\cdot\bigg\{ C\frac{\Vert K^*\Vert^2(\Vert K^*\Vert+\Vert \hat K^*\Vert)}{\epsilon^2}+\frac{(\Vert K^*\Vert+\Vert \hat K^*\Vert)}{\epsilon} \bigg\},
\end{equation*}
where $C=\frac{1+\sqrt 5}{2}$. $\Vert K^*\Vert,\Vert \hat K^*\Vert$ and $\xi_\eta$ are defined in Theorem~\ref{main}.
\end{thm}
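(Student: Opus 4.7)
The plan is to exploit the regularization $\epsilon I$ to convert the generalized eigenproblem into a standard one and then apply a Davis--Kahan type $\sin\Theta$ bound. By Lemma~\ref{norm lemma}, $S$ and $\hat S$ are positive semi-definite, so $(S+\epsilon I)$ and $(\hat S+\epsilon I)$ are strictly positive definite with $\Vert(S+\epsilon I)^{-1}\Vert,\Vert(\hat S+\epsilon I)^{-1}\Vert\le 1/\epsilon$. Thus (\ref{GEP_reg}) is equivalent to the SEP $Mw=\lambda w$ with $M:=(S+\epsilon I)^{-1}D$, and similarly $\hat M:=(\hat S+\epsilon I)^{-1}\hat D$. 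Although $M$ is not symmetric, it is similar to the symmetric PSD matrix $(S+\epsilon I)^{-1/2}D(S+\epsilon I)^{-1/2}$, so its spectrum is real and a $\sin\Theta$ bound of the form $\Vert\sin\Theta\Vert\le C\,\Vert M-\hat M\Vert/\delta$ applies; the specific value $C=(1+\sqrt 5)/2$ will arise from pulling the symmetric-matrix bound back through the change of basis $w\mapsto(S+\epsilon I)^{1/2}w$.

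Next, bound $\Vert M-\hat M\Vert$ using the resolvent identity $A^{-1}-B^{-1}=A^{-1}(B-A)B^{-1}$:
\begin{equation*}
M-\hat M=\bigl[(S+\epsilon I)^{-1}-(\hat S+\epsilon I)^{-1}\bigr]D+(\hat S+\epsilon I)^{-1}(D-\hat D).
\end{equation*}
Combining $\Vert(S+\epsilon I)^{-1}\Vert,\Vert(\hat S+\epsilon I)^{-1}\Vert\le 1/\epsilon$ with the triangle inequality gives
\begin{equation*}
\Vert M-\hat M\Vert\le\frac{\Vert D\Vert\,\Vert S-\hat S\Vert}{\epsilon^2}+\frac{\Vert D-\hat D\Vert}{\epsilon}.
\end{equation*}
Lemma~\ref{norm lemma} yields $\Vert D\Vert\le\Vert K\Vert^2\Vert H^D\Vert=\Vert K^\star\Vert^2$, and the algebra already carried out in the proof of Theorem~\ref{main} gives $\Vert D-\hat D\Vert,\Vert S-\hat S\Vert\le(\Vert K^\star\Vert+\Vert\hat K^\star\Vert)\Vert K-\hat K\Vert$.

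Finally, applying Lemma~\ref{error theorem} to each block of the block-diagonal $K=\mathrm{diag}(K_1,\dots,K_v)$ and taking a union bound over the $v$ views yields $\Vert K-\hat K\Vert=\max_i\Vert K_i-\hat K_i\Vert\le\xi_\eta$ with probability at least $1-\eta$, after which stitching the three bounds together reproduces the target inequality. The main obstacle is the symmetrization step: the classical Davis--Kahan statement assumes symmetric matrices, and while $M$ is similar to one, the map $w\mapsto(S+\epsilon I)^{1/2}w$ distorts angles between eigenspaces, so controlling this distortion tightly without introducing additional $\epsilon$-dependent factors beyond those the theorem claims, and pinning down the exact numerical constant $C=(1+\sqrt 5)/2$, is the most delicate part of the argument.
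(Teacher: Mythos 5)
Your overall skeleton --- regularize, invert, set $M=(S+\epsilon I)^{-1}D$, split $M-\hat M$ via the resolvent identity, bound the two pieces, and finish with a $\sin\Theta$ theorem --- is exactly the paper's. The genuine gap is in where the constant $C=\frac{1+\sqrt 5}{2}$ comes from and where it lands. In the paper, $C$ is \emph{not} a Davis--Kahan constant: it enters through Theorem 4.1 of Wedin (1973), the perturbation bound for (pseudo-)inverses, used to control $\Vert[(S+\epsilon I)^{-1}-(\hat S+\epsilon I)^{-1}]D\Vert$; that is precisely why $C$ multiplies only the $\epsilon^{-2}$ term in the final bound and not the $\epsilon^{-1}$ term. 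Your plan instead derives $\Vert M-\hat M\Vert\le \Vert D\Vert\Vert S-\hat S\Vert/\epsilon^{2}+\Vert D-\hat D\Vert/\epsilon$ with no constant (which is fine in itself) and then proposes to recover $C$ from the change of basis $w\mapsto(S+\epsilon I)^{1/2}w$ in the symmetrization step. That cannot work as stated: (i) any constant produced at that stage is a global prefactor on $\Vert M-\hat M\Vert/\delta$ and would multiply \emph{both} terms, so stitching your three bounds together does not reproduce the claimed inequality; and (ii) the angle distortion of that basis change is governed by the condition number of $(S+\epsilon I)^{1/2}$, roughly $\sqrt{(\Vert K^\star\Vert^{2}+\epsilon)/\epsilon}$, which is $\epsilon$-dependent and has nothing to do with the golden ratio --- exactly the ``additional $\epsilon$-dependent factors'' you worry about would in fact appear if you pursued that route.

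For the $\sin\Theta$ step itself, the paper sidesteps the symmetrization issue you flag by a different (and admittedly terse) argument: since $(S+\epsilon I)^{-1}D$ is similar to the symmetric PSD matrix $(S+\epsilon I)^{-1/2}D(S+\epsilon I)^{-1/2}$, its eigenvalues are real and non-negative and are identified with singular values, and the classical Wedin (1972) $\sin\Theta$ theorem for singular subspaces is then invoked with only the factor $1/\delta$. Your instinct that relating eigenvectors of the non-normal $M$ to those of its symmetrization is the delicate point is sound --- the paper does not fully spell this out either --- but the intended proof does not route the constant $C$ through that step. To repair your argument, keep your resolvent decomposition, obtain $C$ from the inverse-perturbation bound on $(S+\epsilon I)^{-1}-(\hat S+\epsilon I)^{-1}$, and apply a plain $1/\delta$ $\sin\Theta$ bound at the end; the probability accounting should also use per-view confidence $1-(1-\eta)^{1/v}$ as in the definition of $\xi_\eta$, rather than a crude union bound.
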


\begin{proof}
(of Theorem 3) Since $(S+\epsilon I)$ is invertible, we may consider the SEP $(S+\epsilon I)^{-1}Dw=\lambda w$. We have
\begin{align}
    &\Vert (S+\epsilon I)^{-1}D-(\hat S+\epsilon I)^{-1}\hat D \Vert \nonumber\\
    =&\Vert [(S+\epsilon I)^{-1}-(\hat S+\epsilon I)^{-1}]D+(\hat S+\epsilon I)^{-1}(D-\hat D)\Vert \nonumber\\
    \leq& \Vert [(S+\epsilon I)^{-1}-(\hat S+\epsilon I)^{-1}]D\Vert \nonumber\\
    &\hspace{0.8in} +\Vert (\hat S+\epsilon I)^{-1}(D-\hat D)\Vert \nonumber\\
    \overset{(i)}{\leq}& C\frac{\Vert K^*\Vert^2(\Vert K^*\Vert+\Vert \hat K^*\Vert)\xi_\eta}{\epsilon^2}+\frac{(\Vert K^*\Vert+\Vert \hat K^*\Vert)\xi_\eta}{\epsilon}, \nonumber
\end{align}
where $C=\frac{1+\sqrt 5}{2}$. Here $(i)$ is induced by Theorem 4.1 in~\cite{Article:Wedin_1973}. Since $(S+\epsilon I)$ is positive definite and symmetric, $(S+\epsilon I)^{-1}$ is also symmetric and positive definite. Given that $D$ is symmetric and positive semi-definite, we know that $(S+\epsilon I)^{-1}D$ is similar to a symmetric PSD matrix,
\begin{align*}
    &(S+\epsilon I)^{1/2}[(S+\epsilon I)^{-1}D](S+\epsilon I)^{-1/2}\\
    &\hspace{0.8in} =(S+\epsilon I)^{-1/2}D(S+\epsilon I)^{-1/2}.
\end{align*}
Hence, the eigenvalues of $(S+\epsilon I)^{-1}D$ are all real and non-negative. Therefore, the eigenvalues is equivalent to singular values. The proof is then complete using the classic sin$\Theta$ Theorem from~\cite{Article:Wedin_1972}.
\end{proof}

From Theorem~\ref{main} and Theorem~\ref{reg_theo}, we know that for both the original and regularized GEP, adopting linearized kernels could approximate the eigenspace of using exact kernel matrices, with a sufficiently large number of random features. This provides a theoretical support for the usage of RFF's in KMvDA.

\subsubsection{Comparison to Randomized CCA}
In~\cite{Proc:Lopez_ICML14}, the authors propose randomized CCA (RCCA), which also solves a GEP in the form of $Ax=\lambda (B+\epsilon I)x$. However, it turns out that the problem is very different. More specifically,
\begin{itemize}
    \item RCCA only involves two views, while KMvDA may include multiple views.

    \item The covariance matrices in RCCA is much simpler (block diagonal and linear in $K$), while for KMvDA the formulation is more sophisticated and quadratic in $K$.

    \item We consider both the regularized problem and the general case of definite eigen problem without regularization, while~\cite{Proc:Lopez_ICML14} only studies the formulation with regularization.
\end{itemize}

\section{Experiments}

In this section, we present empirical results that illustrate the performance of KMvDA and linearized KMvDA using random Fourier features. The major goal is to show 1) KMvDA improves linear MvDA, and 2) randomized KMvDA is able to well approximate the performance of KMvDA with sufficient number of RFF's.

\subsection{Datasets}

We test our algorithms on 3 popular datasets for multi-view learning research and applications. All datasets are publicly available.

\textbf{Heterogeneous Face Biometrics (HFB)} database has 100 persons in total, with 4 composed of visual (VIS) and 4 near infrared (NIR) face images for each person. This gives us a 2-view classification problem. For each view, we have 400 examples in total from 100 different people. We use the first 65 persons for training and the remaining 35 persons for testing. Each example is a $32\times 32$ image, which is transformed into 1024 features.

\textbf{CUHK Face Sketch FERET (CUFSF)} database is designed for research on face sketch synthesis and face sketch recognition. It includes 1194 persons (i.e., categories) from the FERET database. An example is given in Figure~\ref{fig:CUFSF-example}. This dataset contains two views: 1) face photo with lighting variation, and 2) sketch with shape exaggeration drawn by an artist when viewing this photo, both with dimensionality 5120. We use the first 650 examples as training set and the rest 544 examples for testing.

We use \textbf{Multi-PIE} dataset to test the performance of KMvDA on dealing with multiple views and larger sample size. The whole dataset contains more than 750,000 face images of 337 people, under different poses and from distinct views. In our experiment, we choose 7 different views (left $45^\circ$, $30^\circ$, $15^\circ$, frontal, right $15^\circ$, $30^\circ$, $45^\circ$), three facial expressions (smile, neutral, disgust), and no flush illumination as the evaluation data. Each example is a 5,120 dimensional vector. This subset is divided into two parts: the images from the first 248 subjects with 4 randomly selected images under each pose of each person are utilized as training data and the images from the rest are utilized as test data.

\subsection{Parameters and Performance Evaluation}

\textbf{Kernels.}\hspace{0.1in} There is no tuning parameter for linear kernel. For RBF kernel, we fine-tune the parameter $\sigma$ over a fine grid in the range of $\{0.001, 100\}$. The number of random Fourier features are chosen to be $m=\{2^6,2^7,...,2^{15}\}$ for each view. We set $\sigma$ for RFF's the same as fine tuned parameter value for RBF kernel to compare the approximation effectiveness of linearized methods. RFF vectors are normalized to have unit norm.

\vspace{0.05in}
\noindent\textbf{Evaluation.}\hspace{0.1in}  We mainly use the classification test accuracy to evaluate the model performance. We denote ``$v_2$-$v_1$'' when using training examples from view $v_1$ to classify test examples from view $v_2$. The metric we use is the rank-1 recognition rate, which is the highest test accuracy among all parameter $\sigma$ and projection dimensionality $d$.

\begin{table}
	\caption{Results of rank-1 recognition rate (\%) of different kernels.	\vspace{-0.15in}}
	\label{Tab1}
    \centering
		\scalebox{1.1}[1]{
		\begin{tabular}{l|l|llll}\hline
			&              & \textbf{Linear} & \textbf{RBF}   & \textbf{RFF $\dot f$}  \\
			\hline
			\multirow{2}{*}{\makecell[l]{\textbf{HFB }}}   & NIR-VIS      & 56.4     & 59.3   & \textbf{60.7} \\
			& VIS-NIR      & 47.2    & \textbf{51.4}   & \textbf{51.4}\\
			\hline
			\multirow{2}{*}{\makecell[l]{\textbf{CUFSF }}} & Photo-Sketch & 45.2          & \textbf{52.4}       & 51.0 \\
			& Sketch-Photo & \textbf{52.6}    & 52.2  & 52.4 \\
			\hline
			\textbf{Multi-PIE} & Avg. Accuracy & 93.6   & \textbf{94.8}   & \textbf{94.8}  \\\hline
		\end{tabular}}
\end{table}

\subsection{Experiment Results}

\textbf{Overall performance.}\hspace{0.1in}  Table~\ref{Tab1} summarizes the rank-1 recognition rate of different approaches on HFB and CUFSF datasets, and the average rank-1 recognition rate among all 7 views for Multi-PIE dataset. As we can see, RBF kernel significantly outperforms linear kernel in almost all cases. In addition, the accuracy of using linearized approximation is very close to that of using RBF kernel directly, sometimes even slightly better.

\begin{figure}[h]
	\begin{center}
		\mbox{\hspace{-0.2in}
			\includegraphics[width=1.85in]{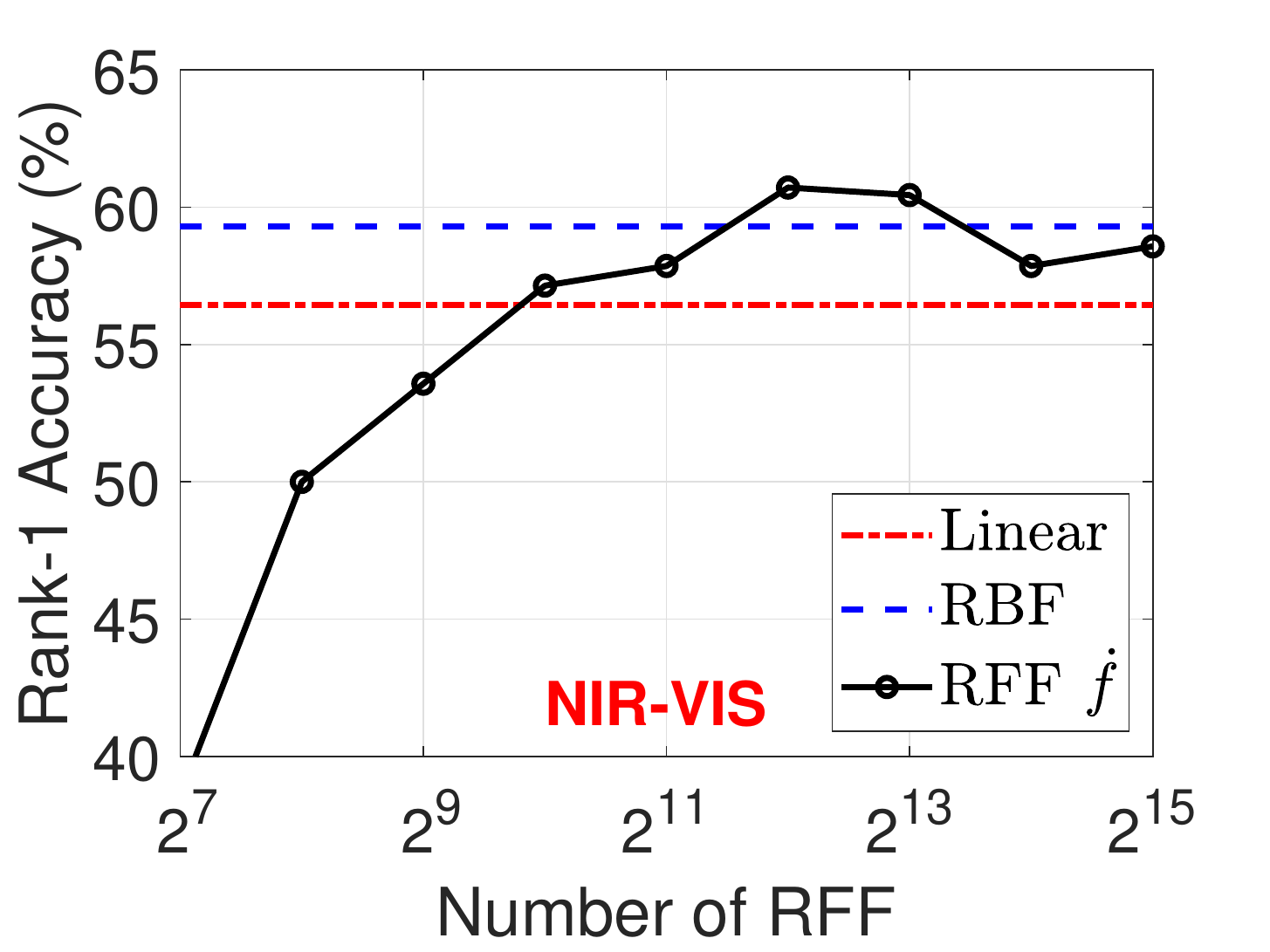}\hspace{-0.12in}
			\includegraphics[width=1.85in]{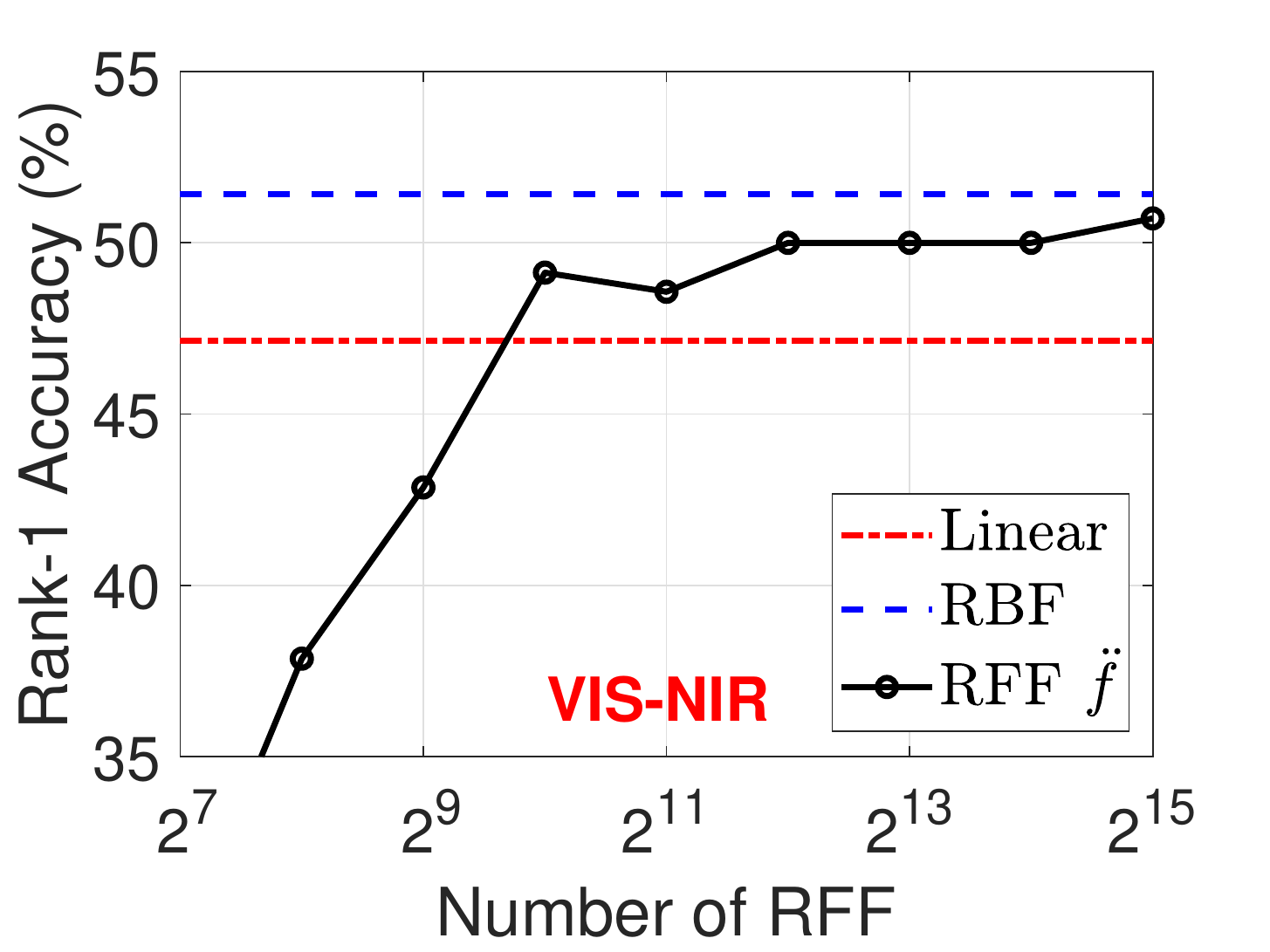}
		}
		\mbox{\hspace{-0.2in}
			\includegraphics[width=1.85in]{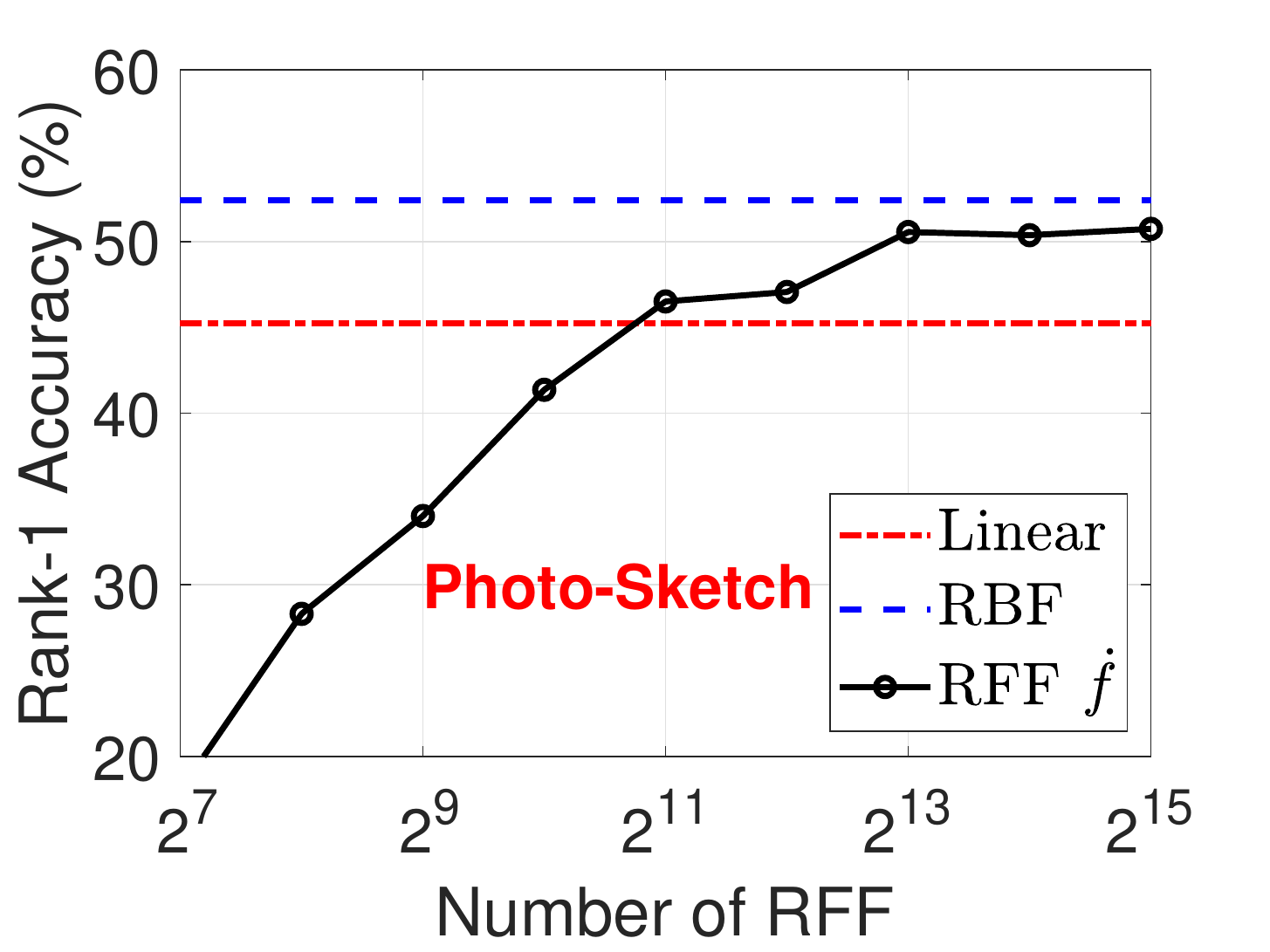}\hspace{-0.12in}
			\includegraphics[width=1.85in]{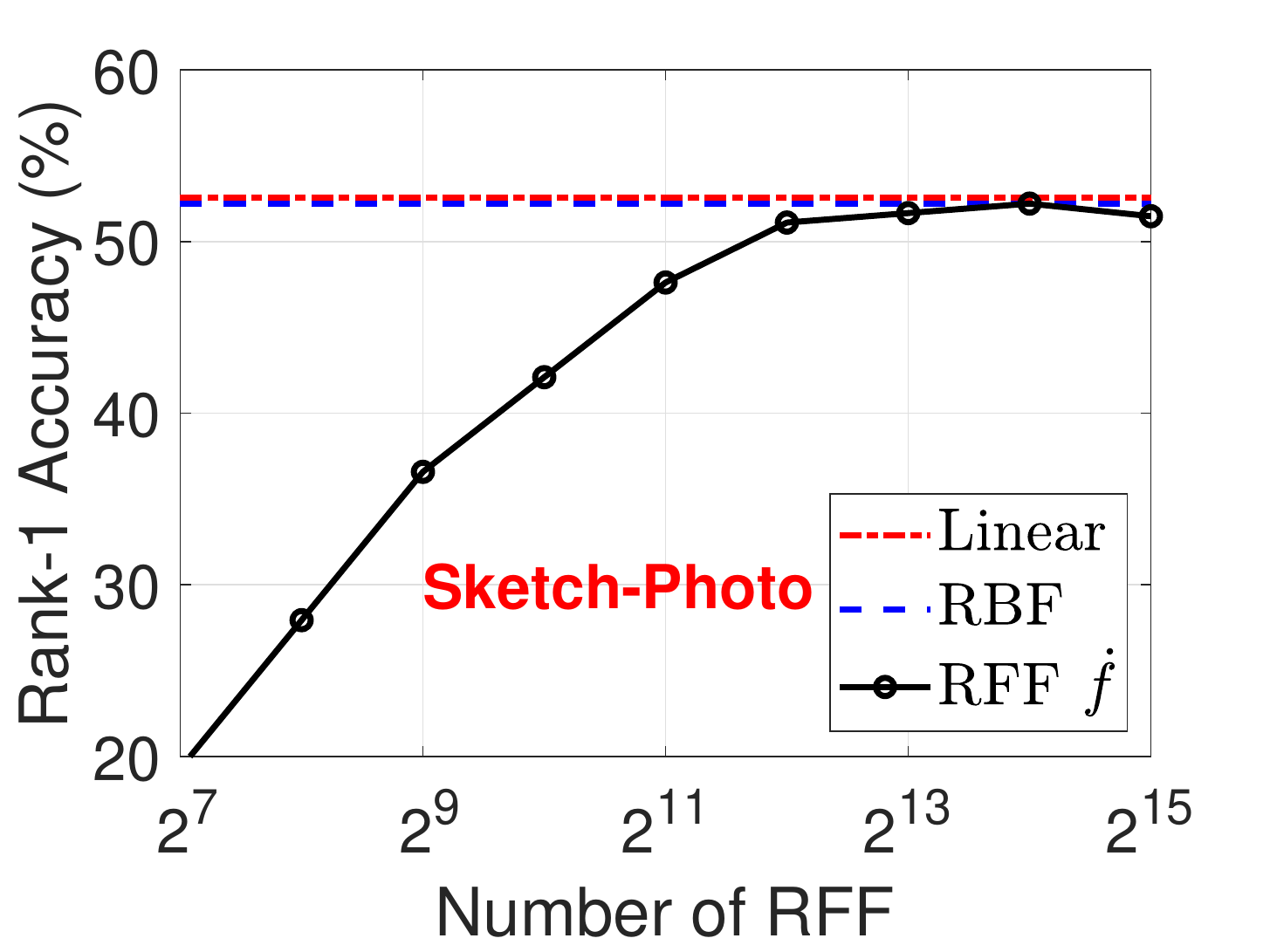}
		}
	\end{center}
 	\vspace{-0.2in}
	\caption{RFF's: rank-1 recognition rate vs. number of random Fourier features. The upper panel is for HFB dataset and the lower panel is for CUFSF dataset.}
	\label{figure1}
\end{figure}

\noindent\textbf{Number of features.}\hspace{0.1in}  In Figure~\ref{figure1}, we plot the highest test accuracy against different $m$, the number of random features. For HFB dataset, the recognition rate becomes stable at around $m=2^{11}$. For CUFSF and Multi-PIE (Figure~\ref{figure3}) dataset, this number is between $2^{12}$ to $2^{13}$. This is consistent with the observation in~\cite{Proc:Buazuavan_ECCV12} that a few thousands of RFF's are often required in order to provide good approximation.

\begin{figure}[h!]
	\begin{center}
		\mbox{
			\includegraphics[width=1.85in]{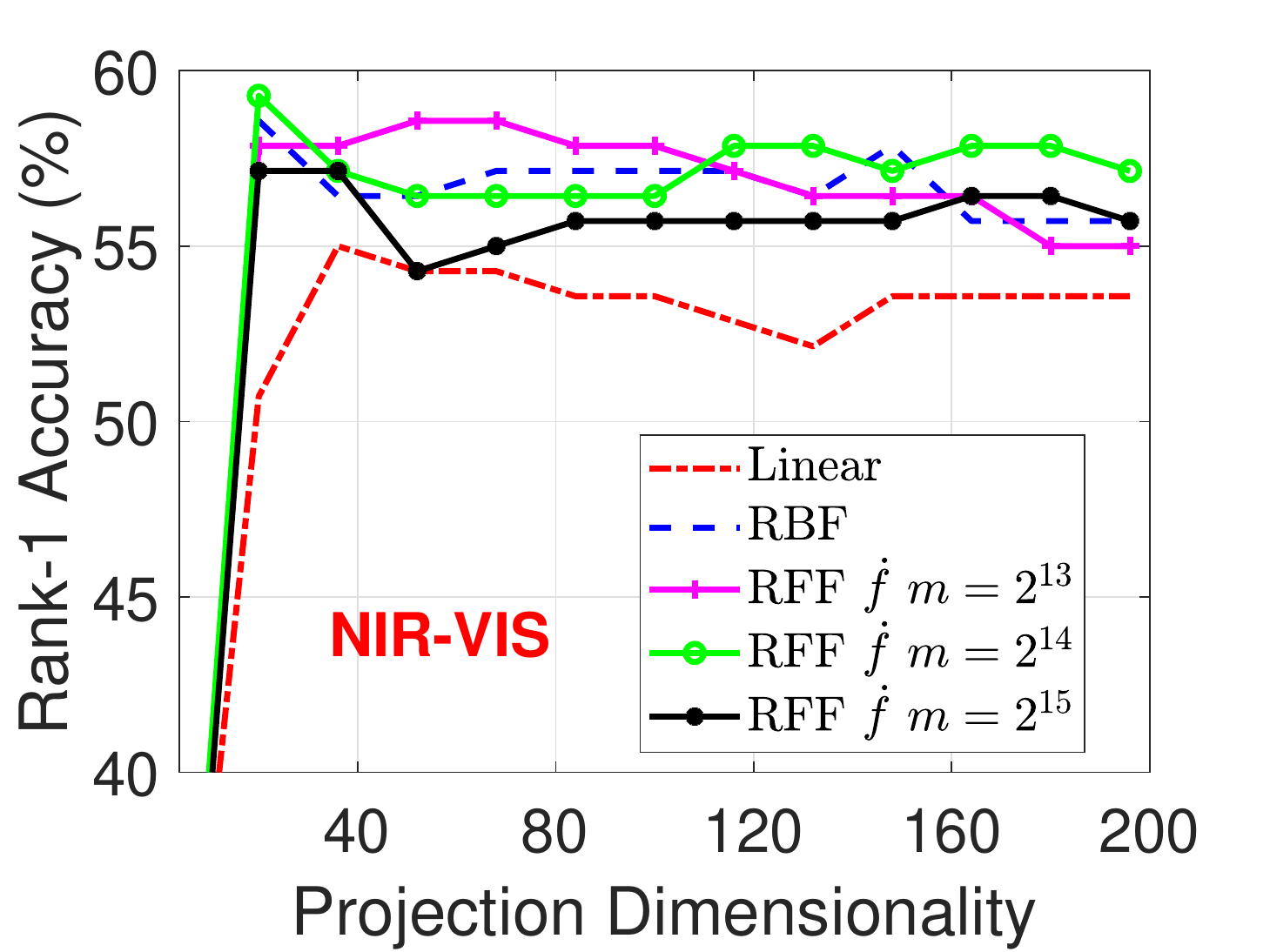}\hspace{-0.12in}
			\includegraphics[width=1.85in]{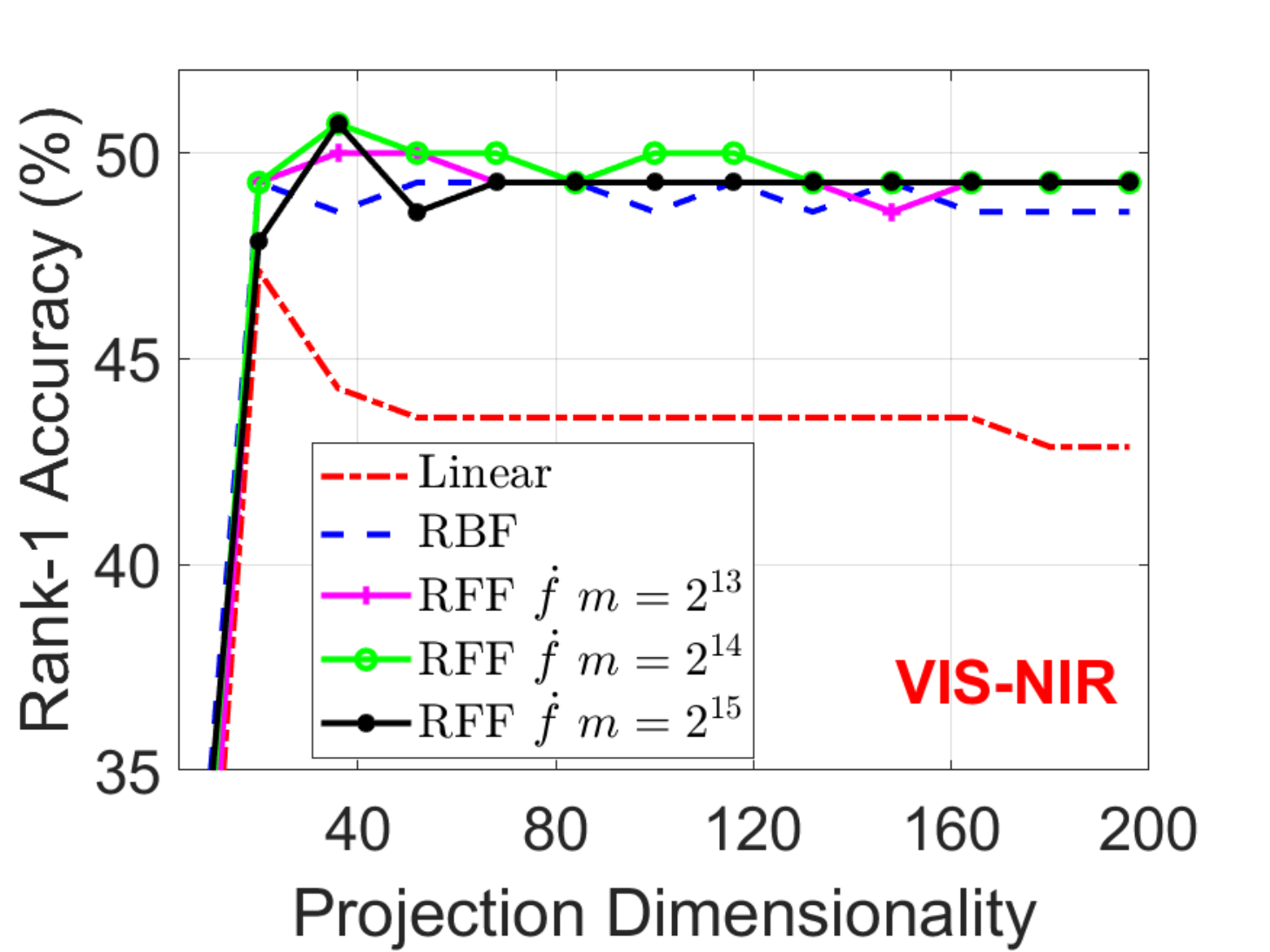}
		}
		\mbox{
			\includegraphics[width=1.85in]{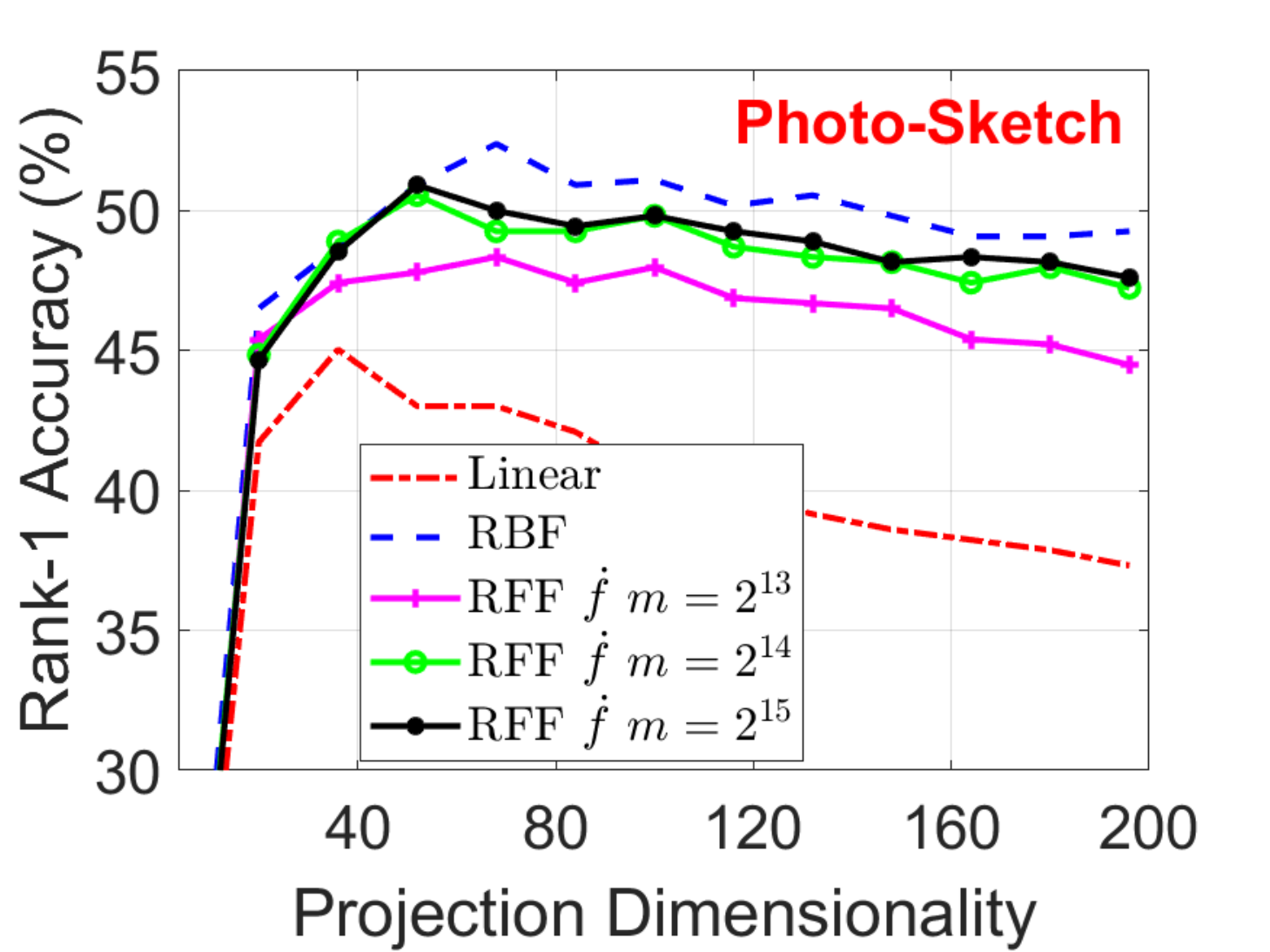}\hspace{-0.12in}
			\includegraphics[width=1.85in]{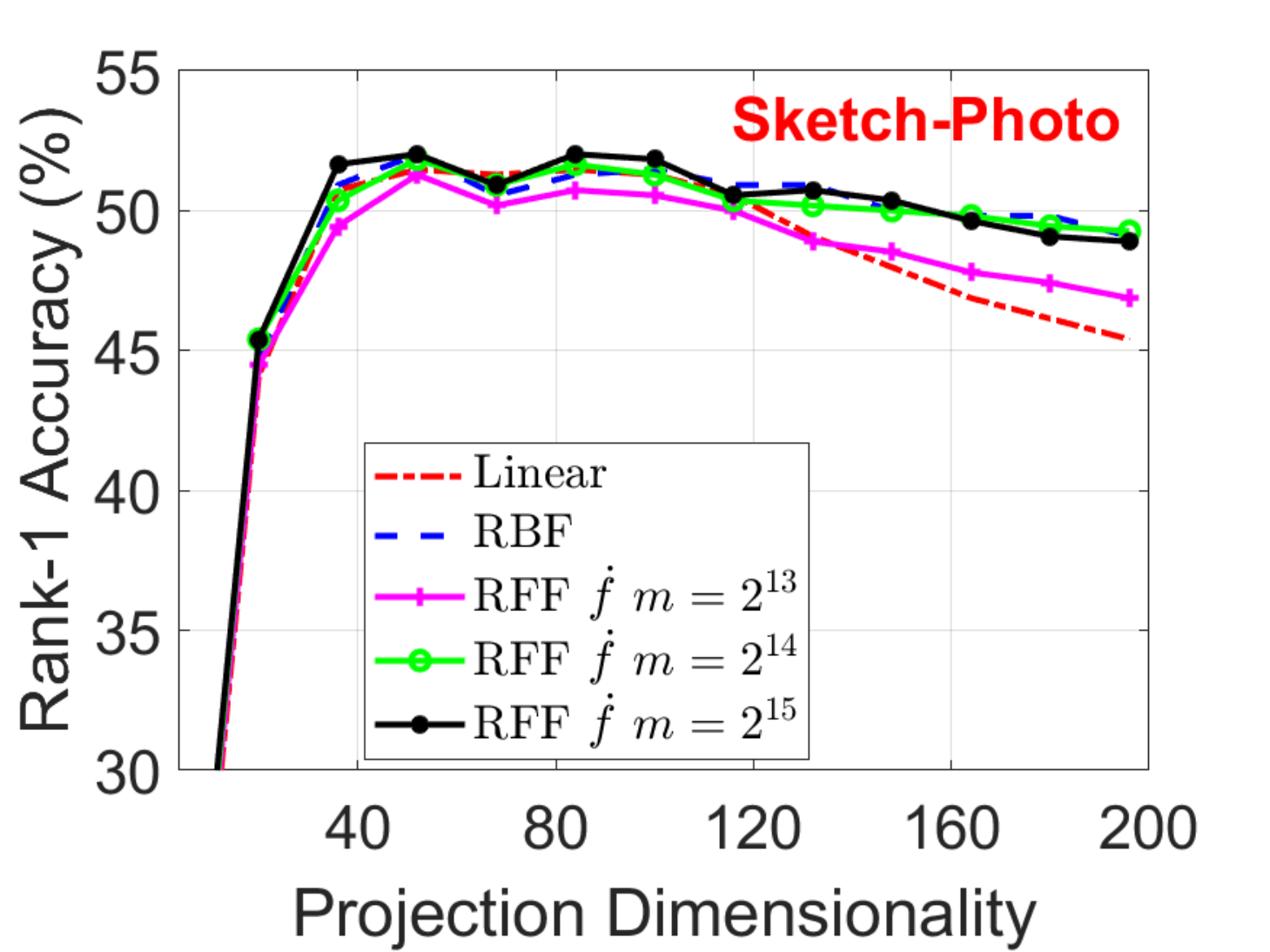}
		}
	\end{center}
 	\vspace{-0.2in}
	\caption{Linear kernel, RBF kernel and RFF's: rank-1 recognition rate vs. projection dimensionality. Upper panel: HFB. Lower panel: CUFSF.}
	\label{figure2} 
\end{figure}

\clearpage


\noindent\textbf{Number of projections.}\hspace{0.1in}  Figure~\ref{figure2} shows the rank-1 accuracy against the subspace dimensionality $l$. We observe for all cross-views, the performance of KMvDA stabilizes after the dimensionality reaches 50, which appears to be a good recommendation in practice. Also, adding more projection directions may deteriorate the test accuracy of linear kernel, since we observe significant decrease in recognition rate in all figures after $l=50$. In this sense, RBF kernel (as well as RFF's) is much more robust.\\

\noindent\textbf{Multi-PIE dataset.}\hspace{0.1in}  Tables~\ref{multi-pie linear},~\ref{multi-pie RBF}, and~\ref{multi-pie RFF2} demonstrate the best recognition rate among all views of Multi-PIE dataset. Here gallery means training view, and probe refers to test view. We see that RBF kernel improves the accuracy on almost all cross-views. The pair $(0,-45^{\circ})$ and $(0,45^{\circ})$ are most challenging tasks since the front face is most different from the face seen from $\pm 45^{\circ}$ angle. For these cross-views, RBF can increase the accuracy by around 5\%. Figure~\ref{figure3} shows the results on this cross-view. Figure~\ref{figure4} plots the average accuracy among all pair of views, which again confirms the convergence since the curves of RBF and RFF's almost overlap.

\begin{table}
	\caption{Multi-PIE: Linear, rank-1 recognition rate (\%).\vspace{-0.1in}}
	\label{multi-pie linear}
	\centering
	\scalebox{1}[1]{
		\begin{tabular}{l|lllllll}
			\toprule
			\multirow{2}{*}{Probe}   & \multicolumn{7}{c}{Gallery}      \\
			&$-45^{\circ}$ &$-30^{\circ}$ &$-15^{\circ}$ &$0^{\circ}$ &$15^{\circ}$  &$30^{\circ}$ & $45^{\circ}$   \\
			\hline
			-$45^\circ$  & -  & 97.77 & 93.63 & 87.58 & 86.26   &97.13 & 98.33  \\
			-$30^\circ$  & 97.77   & -  & 96.13  & 96.50&89.81  & 94.46 & 96.18 \\
			-$15^\circ$    & 95.70 & 99.04   & - & 99.36 &92.99  &95.86  & 93.27 \\
			$0^\circ$   & 88.85  & 96.82  & 97.54  & -  &90.45  & 91.85 & 87.22  \\
			$15^\circ$ & 90.45   & 87.93  & 90.76 & 90.88   & - &97.77  & 97.98 \\
			$30^\circ$ & 98.41  & 92.36  & 92.99  & 91.40  &97.77  & -  &99.21 \\
			$45^\circ$ & 98.73   & 94.90  & 93.63 & 88.12 &95.94 & 98.09 &-\\
			\hline
	\end{tabular}}
\end{table}

\begin{table}
	\caption{Multi-PIE: RBF, rank-1 recognition rate (\%).\vspace{-0.1in}}
	\label{multi-pie RBF}
	\centering
	\scalebox{1}[1]{
		\begin{tabular}{l|lllllll}
			\toprule
			\multirow{2}{*}{Probe}   & \multicolumn{7}{c}{Gallery}      \\
			&$-45^{\circ}$ &$-30^{\circ}$ &$-15^{\circ}$ &$0^{\circ}$ &$15^{\circ}$  &$30^{\circ}$ & $45^{\circ}$   \\
			\hline
			-$45^\circ$   & -     & 98.73 & 95.86 & 93.31   & 91.40 & 98.41  & 99.04 \\
			-$30^\circ$   & 98.09  & -     & 97.45 & 97.77   & 93.00 & 96.82 & 98.09  \\
			-$15^\circ$   & 97.77 & 99.04 & -     & 99.36   & 93.95 & 96.50 & 96.50 \\
			$0^\circ$ & 91.08 & 97.13 & 98.41  & -       & 92.36 & 94.27 & 90.76 \\
			$15^\circ$  & 92.68 & 91.08 & 92.04 & 93.31   & -     & 98.73 & 99.04 \\
			$30^\circ$  & 97.45 & 95.22 & 93.63 & 93.95   & 98.41 & -     & 99.04 \\
			$45^\circ$  & 99.04 & 97.77 & 93.63 & 90.13   & 97.45  & 99.36 & -\\
			\hline
	\end{tabular}}
\end{table}

\begin{table}
	\caption{Multi-PIE: RFF $\dot f$, rank-1 recognition rate (\%).\vspace{-0.1in}}
	\label{multi-pie RFF2}
	\centering
	\scalebox{1}[1]{
		\begin{tabular}{l|lllllll}
			\toprule
			\multirow{2}{*}{Probe}   & \multicolumn{7}{c}{Gallery}      \\
			&$-45^{\circ}$ &$-30^{\circ}$ &$-15^{\circ}$ &$0^{\circ}$ &$15^{\circ}$  &$30^{\circ}$ & $45^{\circ}$   \\
			\hline
			-$45^\circ$   & -     & 98.43 & 95.86 & 92.36   & 92.36 & 98.41  & 99.04 \\
			-$30^\circ$   & 98.09  & -     & 97.45 & 97.45   & 93.00 & 96.82 & 97.77  \\
			-$15^\circ$   & 98.09 & 99.04 & -     & 99.36   & 94.59 & 96.50 & 95.86 \\
			$0^\circ$ & 90.76 & 97.13 & 98.41  & -       & 92.04 & 93.95 & 91.08 \\
			$15^\circ$  & 92.68 & 91.40 & 92.68 & 92.68   & -     & 99.04 & 99.04 \\
			$30^\circ$  & 97.77 & 94.90 & 93.95 & 93.31   & 98.41 & -     & 99.04 \\
			$45^\circ$  & 99.04 & 97.45 & 93.95 & 89.81   & 97.45  & 99.36 & -\\
			\hline
	\end{tabular}}
\end{table}


\begin{figure}[t]
	\begin{center}
		\mbox{\hspace{-0.2in}
			\includegraphics[width=1.9in]{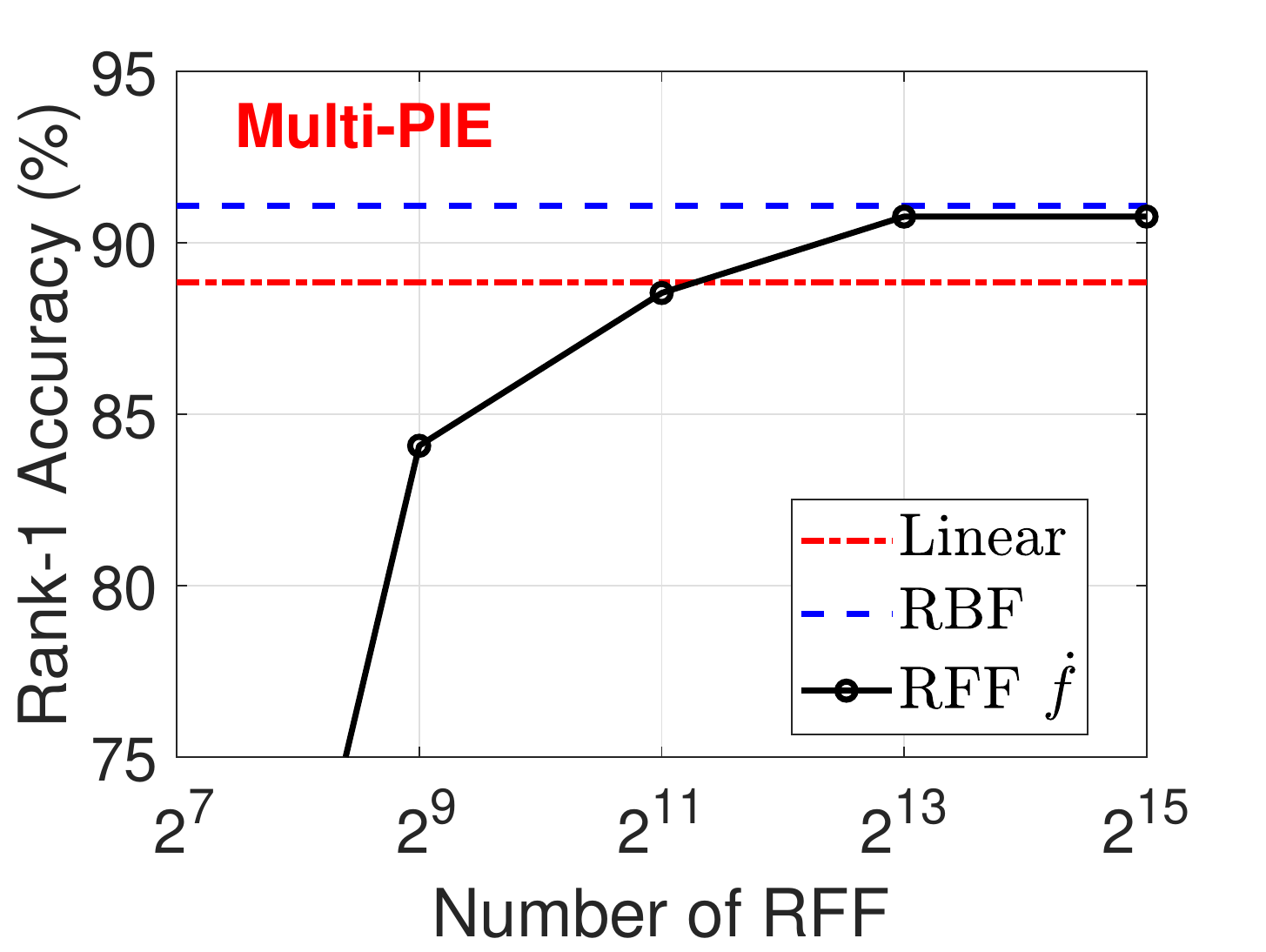}\hspace{-0.1in}
			\includegraphics[width=1.9in]{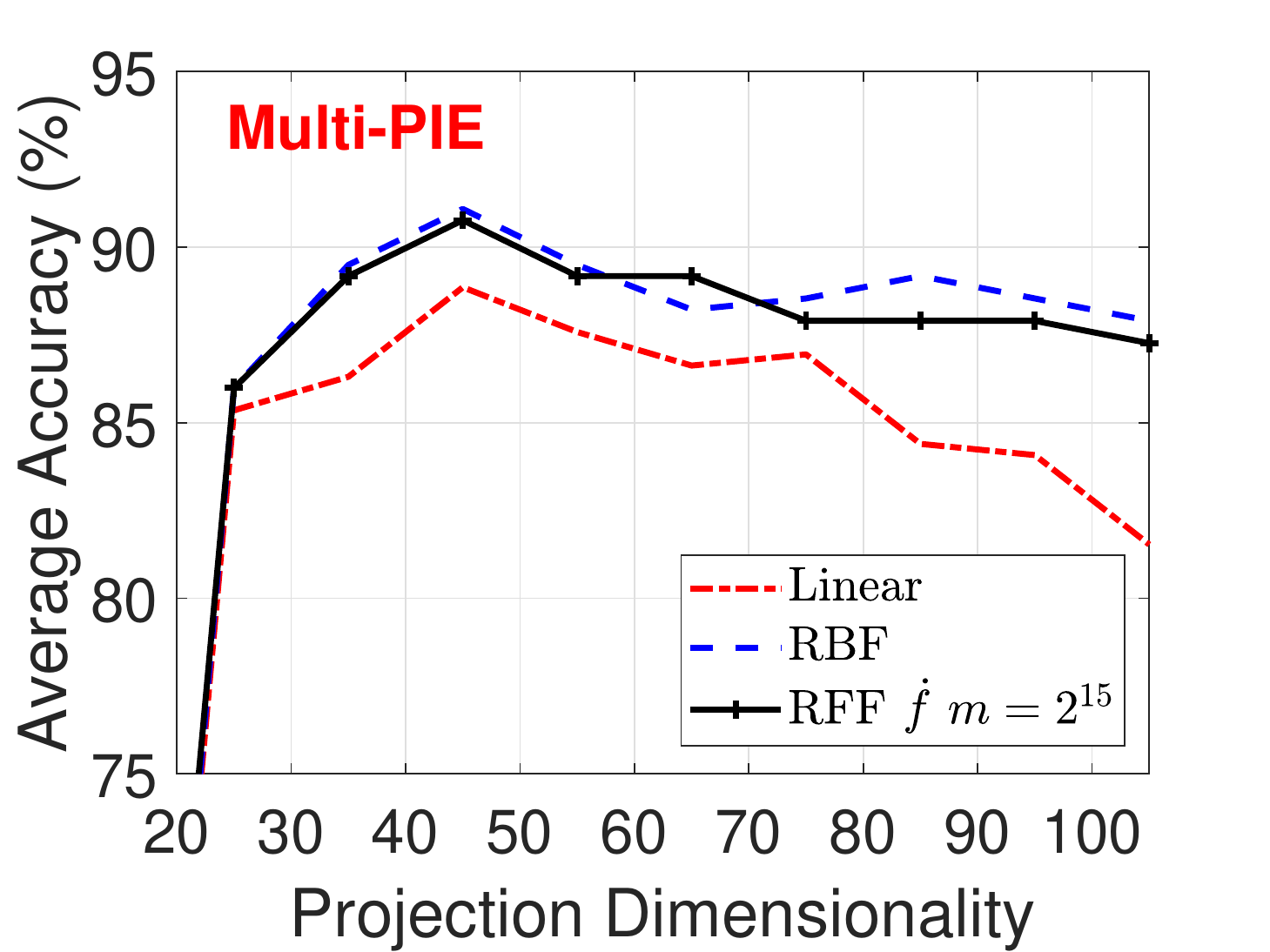}
		}
	\end{center}
	\vspace{-0.2in}
	\caption{Multi-PIE dataset $-45^\circ\rightarrow 0^\circ$ cross-view. Left panel: Accuracy vs. number of RFF's. Right panel: Accuracy vs. projection dimensionality.}
	\label{figure3}
\end{figure}

\begin{figure}[h!]
\begin{center}
		\mbox{
		\includegraphics[width=2.1in]{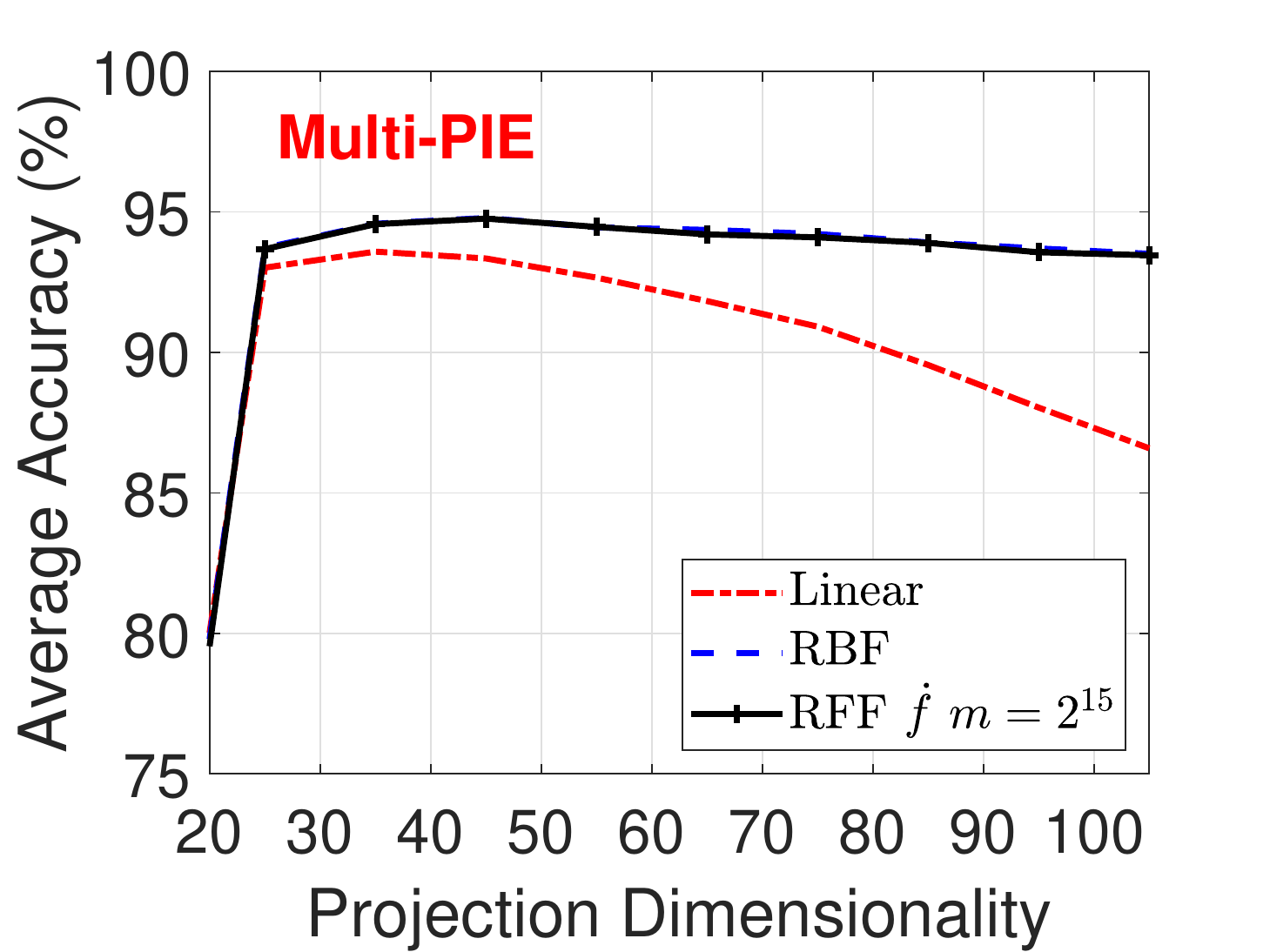}
		}
	\end{center}
 	\vspace{-0.2in}
	\caption{Multi-PIE dataset: rank-1 recognition rate vs. projection dimensionality of average recognition rate of all cross-views.}
\label{figure4}
\end{figure}


\section{Concluding Remarks}
In this present paper, we look into the problem of multi-view discriminant analysis, and incorporate kernel method to improve the learning performance. We seek to linearize the process by adopting random Fourier features to approximate the RBF kernel. Theoretical analysis on the change in eigenspace with such approximation is provided. We conduct experiments on various multi-view datasets to show that kernel MvDA notably improves vanilla MvDA in multi-view retrieval tasks, and using linearized kernels can well approximate the learning power of using the exact kernel in such problems. As multi-view model becomes more and more popular with many important applications in practice, we expect our work to be valuable for large-scale multi-view tasks, and motivate more research on randomized multi-view learning algorithms. Admittedly, this paper is just the beginning of the line of interesting work on randomized kernel multi-view learning and Authors look forward to seeing better (e.g., more accurate) algorithms and improved theory in the future.

\section*{Acknowledgement}
We thank the anonymous referees for their constructive comments.  The work was partially supported by NSF-III-1360971, NSF-Bigdata-1419210, ONRN00014-13-1-0764,  AFOSR-FA9550-13-231-0137, and NSFC-61572463. Jie Gui's work was conducted while he was a postdoctoral researcher at Rutgers University.\\

\bibliography{standard}

\end{document}